\tikzset{
    treenode/.style   = {   align=center, 
                            inner sep=0pt, 
                            text centered,
                            font=\sffamily },
    selNode/.style  = {     treenode, 
                            circle, 
                            fill=black!20, 
                            font=\sffamily\bfseries, 
                            draw=black, 
                            text width=2.0em,
                            very thick },
    blankNode/.style    = { treenode,
                            circle, 
                            black, 
                            draw=black, 
                            text width=2.0em }
} 
\newtheorem{theorem}{Theorem}[section]
\newtheorem{corollary}{Corollary}[theorem]
\newtheorem{lemma}[theorem]{Lemma}
\author{ Vishnu Raj \hspace{16pt} Sheetal Kalyani \\
    \hspace{-0 cm}Department of Electrical Engineering \\ Indian Institute of Technology, Madras \\
    Chennai, India 600 036\\
    \texttt{\{ee14d213,skalyani\}@ee.iitm.ac.in}
}
\title{An aggregating strategy for shifting experts in discrete sequence prediction}
\begin{document}
    \maketitle

    \begin{abstract}
        We study how we can adapt a predictor to a non-stationary environment with advices from 
        multiple experts. We study the problem under complete feedback when the best expert
        changes over time from a decision theoretic point of view. Proposed algorithm
        is based on popular exponential weighing method with exponential discounting.
        We provide theoretical results bounding regret under the exponential discounting
        setting. Upper bound on regret is derived for finite time horizon problem. Numerical
        verification of different real life datasets are provided to show the utility of proposed
        algorithm.
    \end{abstract}


\section{Introduction}    \label{sec:intro}
        Online prediction is a widely studied topic with applications including user
        activity prediction, webpage requests prediction, location prediction in wireless mobile networks etc., 
        It deals with the problem of predicting future symbols by observing an online stream
        of symbols in the sequence. Multiple approaches are proposed from diverse fields 
        such as information theory (See \cite{Feder1992}, \cite{Willems1995},
        \cite{Dekel2009}), machine learning (See \cite{Cover1977a}, \cite{Bialek2001},
        \cite{Dietterich}, \cite{Begleiter2004}) etc. One of the widely successful approach for 
        online prediction is to assume \textit{Markov} property while modeling the sequence.
        By assuming that a  finite history of past observations will be enough for predicting
        the future symbols, Markovian models make predictions based on the observed history. 
        This finite history of symbols is referred to as \textit{context} and the length 
        of context is the \textit{order} of Markov model.

        Creating a Markovian model based predictor comprises of two steps - building a frequency
        table for each context based on past observations and devising a method to make 
        predictions from the table. A tree structure is one of the popular methods for storing 
        the frequency counts. These frequency trees can be stored as special data structures known
        as tries (See \cite{Fredkin1960}). In a trie, each node can represent a context from the frequency
        table and prediction is made based on the symbol distribution at each node.
        One of the main challenges while constructing the frequency tree is the loss of
        information at cross phrase boundaries. Rate of convergence to optimal predictability
        is another challenge during frequency tree construction. By updating the count at
        each node (context) based on the observed symbol, a trie structure can converge to 
        the Markovian model of underlying sequence generator with enough data. Once such a 
        tree is built, prediction task merely becomes selecting the most probable symbol
        based on the context. Challenge lies in how to predict while the tree is still being built.
        A few notable works in this area include Prediction by Partial Matching (See \cite{Cleary1984},
        \cite{Cleary1995}), Context Tree Weighting (See \cite{Willems1995}), Probabilistic Suffix Trees
        (See \cite{Ron1996}), Compact Prediction Trees (See \cite{Gueniche}, \cite{Gueniche2015}) etc.
        
        \cite{Cleary1984} proposed \textit{Prediction by Partial Matching} (PPM) 
        initially for data compression as a method for encoding symbols in a sequence. Since
        predictability and compressibility are related (See \cite{Feder1992}), PPM can also be used
        in predicting future symbols from discrete sequences. By combining predictions from 
        multiple sub-contexts from the tree, PPM attempts to model the symbol probability as 
        combination of multiple sub-contexts within the given context. 
        \cite{Das2002} showed a successful application of PPM on frequency trie based on LeZi update for the 
        task of inhabitant action prediction in a smart home environment. Since sequences
        are broken into sub-sequences and update of trie is performed based on these \textit{phrases},
        information about the relationship between the symbols at both ends of sub-sequences are
        lost. This loss at \textit{cross-phrase boundary}, along with the complexity of sequence
        and number of symbols in the alphabet contribute together to the slow convergence of 
        frequency tree.         
        \cite{Gopalratnam2003} proposed Active LeZi as a method
        to optimally predict the symbol without losing the information at cross-phrase
        boundaries. To address the problem of slow convergence, they used a sliding window over the
        phrases with variable length.
        For the 
        prediction phase, Active LeZi employs PPM. One of the useful measures in analyzing 
        convergence of a predictor is \textit{FS Predictability}. Finite State (FS) Predictability
        of an infinite sequence is defined as the minimum fraction of prediction errors that 
        can be made by any finite state predictor (See \cite{Feder1992}). Active LeZi is able to build 
        predictors which are able to  converge to FS Predictability at the rate $O(\sqrt{n})$. 
        
        
        Another challenge in building frequency trees is to decide on the maximum depth of the tree.
        This is an important issue from a practical point of view, as it is not possible
        to enjoy an infinite memory requirement. \cite{Korodi2008} proposed 
        a finite bounded length context tree approach and used distribution entropy for order 
        estimation focusing on data compression. Even though the proposed method did not perform 
        better than other PPM improvements, it showed that by properly bounding the context length,
        it is possible to obtain a performance comparable to other methods.
        
        The success of Markovian predictors depends on how strong the assumption of stationarity 
        in underlying generating process is. If the sequence is non-stationary, then the 
        predictor needs to adapt. \cite{Pulliyakode2014} proposed an adaptive model 
        combining algorithm
        which is loosely based on Active Lezi and has empirically shown to achieve
        better results than conventional PPM methods. Following the work of \cite{Korodi2008},
        they employed a fixed length sliding window for constructing the trie, rather than 
        using a variable length sliding window over cross phrase boundaries.
        PPM algorithm with multiple context lengths are ran on this trie and the algorithm
        weighs each model based on the past performance. These weights are later used to
        combine the predictions from each model. But no theoretical guarantees on the performance
        of predictor is provided.
        
        Prediction by combining multiple experts is a long studied problem. 
        Assuming that the decision maker (predictor) has complete
        knowledge about all the past decisions and performance of experts, the goal
        is to perform as good as the best expert in the pool. The method of using
        advices from multiple experts is introduced by \cite{Vovk1990} and is generalized
        as a strategies aggregating framework by \cite{Littlestone1994}. \cite{Freund1997}
        gave a decision theoretic generalization to the problem. By adopting
        a multiplicative update of weight parameter, they were able to produce algorithms performing
        almost as well as the best expert in the pool, within a loss of $O(\sqrt{N \ln K})$, 
        where $N$ denotes time-steps and $K$ is the number of experts in the pool.
        
        Given a training set of sequences, \cite{Eban2012} describes a procedure for learning a set of experts that will work on online prediction.
        On this pool of experts, they run traditional setting of statistical learning to produce
        a model whose expected loss is as small as possible over an online sequence. They achieve this
        by first minimizing training loss on the experts and then minimizing hindsight loss in
        online prediction for the learning model. \cite{Cortes2014} introduces a series of learning 
        algorithms for designing
        accurate ensembles of structured prediction task. Their goal is, given a set of labeled
        training examples, exploit sub structures present in the problem domain to design experts 
        and combine these experts to form an accurate ensemble. Here the experts are trained on a
        set of labeled samples and the ensemble algorithm is trained on a distinct set of samples. This
         ensemble is then used to predict labels for a 
        given sequence of labels.
        
        In both the cases above (\cite{Eban2012} and \cite{Cortes2014}), the models are first trained on a
        dataset that is considered to be uniformly sampled from the problem domain and then 
        prediction is performed online.
        We are in search of methods which does not require pre-training as the aforementioned methods, but
        are able to combine multiple experts on an online manner to produce better results.
        Our goal considerably varies from the above methods as we want neither our experts nor
        the combining forecaster to be pre-trained. Our goal is to train the experts and the final forecaster
        online while they are expected to make predictions. This pose challenges of experiencing
        a greater loss during the initial stages of prediction. 
        
        By combining Mixing Past Posteriors (MPP) (see \cite{Bousquet2003}) and AdaHedge (see \cite{DeRooij2014}),
        \cite{Vladimir2017} proposed an online aggregation algorithm for the problem of shifting experts.
        By using the adaptive learning rate of AdaHedge, they modified MPP and obtained regret bounds of 
        signed unbounded losses under adversarial setting. Empirical results provided show that the
        modified algorithm outperform AdaHedge in both synthetic and real data, even when the losses of
        experts are volatile.
        
        Motivated from Decision Theoretic Online Learning view of combining multiple experts and 
        Information Theoretic techniques for discrete sequence prediction, this paper propose a 
        discrete sequence predictor that trains online and adaptively adjusts to the changes in model.
        By applying exponential filtering over the past performance of experts, we present a modified
        version of HEDGE algorithm. We also prove the convergence of the model to FS Predictability
        (under stationary assumptions) and obtain an upper bound on the regret of the algorithm. 
        
        Section \ref{sec:problem_definition} formally introduces
        the problem along with the mathematical notations used in this paper. In 
        Section \ref{sec:model_cons}, we discuss the method for constructing the pool of experts
        online and prove the optimal convergence rate. Section \ref{sec:ada_prediction} introduces 
        proposed algorithm and in Section \ref{sec:regret_analysis} we derive the 
        rate of convergence to the best predictor. Experiments conducted
        to validate the proposed method are included in Section \ref{sec:experimental_results}.
    
    \section{Problem Formulation}    \label{sec:problem_definition}
        Let $\mathcal{S}$ denotes the symbol space alphabet. We assume there exists a source which emits
        a symbol $s[n] \in \mathcal{S}$ at discrete time instant $n$. We want to create a predictor, who
        observes all the symbols emitted from the source till time instant $n$, $s_{1:n} = (s[1],s[2],
        \ldots,s[n])$, and predicts the next
        symbol $\hat{s}[n+1]$. We also assume there exists a rewarding mechanism which, after observing
        the actual symbol $s[n+1]$ at time instant $n+1$, will appropriately reward the predictor.
        In generalized online method for prediction, the predictor cannot be assumed to have the knowledge 
        about the sequences and the symbols in the sequences. Hence,the predictor will 
        only have information about the symbols it has seen so far. This subset of symbols from the 
        alphabet constitutes the decision space for the
        predictor. Let $\mathcal{D}_{n}$ denotes the decision space of the predictor at time instant $n$
        with $\mathcal{D}_{n} \subseteq \mathcal{S}$. By construction, $\mathcal{D}_{n} \triangleq
        \{s_{1:n}\}$ where $\{\cdot\}$ is the set operator which returns the unique members in the input.
        
        Let $\mathcal{K}$ denotes the set of experts available to the predictor
        with $|\mathcal{K}| = K < \infty$, where $|\cdot|$ stands for cardinality of the set. 
        The pool of predictors we consider in this paper are Markovian models of order $k$ with 
        $k \in \{0,\ldots,K-1\}$. Let 
        $\hat{s}^{(k)}[n] \in \mathcal{D}_{n}^{(k)}$ be the prediction from $k^{th}$ predictor for time instant 
        $n$. Instantaneous loss of $k^{th}$ predictor is defined as $l^{(k)}[n] = \mathcal{L}(s[n],\hat{s}^{(k)}[n])$,
        $\mathcal{L}(\cdot)$ is the loss function. Define the cumulative loss incurred by $k^{th}$ predictor after $N$ time steps as
        \begin{align}    \label{eqn:discExpertLoss}
            L_{k,N}(\gamma) &= \sum \limits_{n=1}^{N} \gamma^{N-n} l^{(k)}[n],
        \end{align}
        where $ 0 < \gamma \leq 1$ is the discounting factor.
                
        The predictor maintains a
        probability distribution $\textbf{p}[n] \in [0,1]^K$ over the pool of experts. Defining the
        instantaneous loss of predictor as $l[n] = \sum \limits_{k=1}^{K} p^{(k)}[n] \cdot
        l^{(k)}[n]$, the discounted cumulative loss of predictor till time instant $N$ can be
        written as 
        \begin{align}  \label{eqn:predictorLoss}
            L_{N}(\gamma) 
                  = \sum \limits_{n=1}^{N} \gamma^{N-n} l[n] 
                  = \sum \limits_{n=1}^{N} \sum \limits_{k=1}^{K} \gamma^{N-n} p^{(k)}[n] \cdot l^{(k)}[n]
                  = \sum \limits_{n=1}^{N} \gamma^{N-n} \textbf{p}[n] \cdot \textbf{l}[n]
        \end{align}
        where $\textbf{l}[n]$ is the vector of instantaneous loss functions of all the experts at instant
        $n$.
        
        Now, objective of the prediction algorithm to perform as good as the best expert
        in the pool can be represented as
        \begin{align}    \label{exp:objective}
            \underset{\textbf{p}}{\min} \left\{ L_N(\gamma) - \underset{k}{\min}\: L_{k,N}(\gamma) \right\}
        \end{align}
            
        In this paper, we deal with a pool of predictors that share a common decision
        space which is based on only the observed symbols from a finite cardinality symbol
        space. Hence, $\mathcal{D}_{n}^{(k)} = \mathcal{D}_{n}$ $\forall$ $k \in \mathcal{K}$.
        The experts we consider for this problem setup are finite context length PPM predictors which are
        Markovian predictors with different depth levels.
    
    \section{Model Construction}    \label{sec:model_cons}
        This section introduces creating the pool of experts. These experts will be used in the second stage
        by the adaptive predictor to make final predictions. 
        It is empirically shown by \cite{Korodi2008} that predictor will not incur a 
        remarkable loss by bounding the depth. Following this observation, we create experts who are
        fixed context $K^{th}$-order Markov Models and then apply PPM approach to make predictions.
        Our trie building procedure is detailed in Algorithm \ref{alg:KOM-PPM}. PPM is used to calculate the 
        probability of each symbol from the model and the prediction is made as,
        \begin{align}     \label{eqn:expertPrediction}
            d^{(k)}[n] = \underset{d}{\arg \max} P(d|s[n-1],s[n-2],\ldots,s[n-k])
        \end{align}
        where $d$ is a symbol in the alphabet captured in trie.
        
        \begin{algorithm}
            \caption{KOM(K)}    \label{alg:KOM-PPM}
            \begin{algorithmic}[1]
                \State \textbf{Parameters:} Context Length, $k \in I^{+}_{0}$
                \State \textbf{Initialization:} Current Window, $W \gets \phi$
                \For{ n = 1, 2, \ldots}
                    \State Predict next symbol $d^{(k)}[n]$ based on (\ref{eqn:expertPrediction})
                    \State Observe actual symbol $s[n]$ and incur loss $l^{(k)}[n]$
                    \State $W \gets \left[W \: s[n]\right]$ (Append $s[n]$ to $W$)
                    \If{ $length(W) > k$ }
                        \State $W \gets W - w[1]$ (Remove first entry from $W$)
                    \EndIf
                    \For{ i = 1, \ldots, K }
                        \If{ context $W[i:k]$ is available in trie }
                            \State Increment context count by 1
                        \Else
                            \State Insert new context to trie and set value to 1
                        \EndIf
                    \EndFor
                \EndFor
            \end{algorithmic}
        \end{algorithm}
        
        \begin{figure*}[!hbtp]
        \centering 
        \begin{tikzpicture}[ ->, 
                             >=stealth',
                             level/.style = { 
                                                 sibling distance = 4.5cm/#1,
                                                 level distance = 1.5cm
                                            }
                           ] 
            \node [selNode] [label=below:(13)] {$\emptyset$}
                child{
                    node [blankNode] {$a/1$}
                    child{
                        node [blankNode] {$b/1$}
                        child{
                            node [blankNode] {$c/1$}
                        }
                    }
                }
                child{
                    node [blankNode] {$b/4$}
                    child{
                        node [selNode] {$c/4$}
                        child{
                            node [blankNode] {$c/1$}
                        }
                        child{
                            node [blankNode] {$d/1$}
                        }
                        child{
                            node [blankNode] {$b/1$}
                        }
                    }
                }
                child{
                    node [selNode] {$c/6$}
                    child{
                        node [blankNode] {$c/1$}
                        child{
                            node [blankNode] {$d/1$}
                        }
                    }
                    child{
                        node [blankNode] {$d/2$}
                        child{
                            node [blankNode] {$b/1$}
                        }
                        child{
                            node [blankNode] {$c/1$}
                        }
                    }
                    child{
                        node [blankNode] {$b/2$}
                        child{
                            node [blankNode] {$c/2$}
                        }
                    }                    
                }
                child{ 
                    node [blankNode] {$d/2$}
                    child{
                        node [blankNode] {$b/1$}
                        child{
                            node [blankNode] {$c/1$}
                        }
                    }
                    child{
                        node [blankNode] {$c/1$}
                        child{
                            node [blankNode] {$b/1$}
                        }
                    }                    
                };
        \end{tikzpicture}
        \caption{Example tree constructed based on Algorithm \ref{alg:KOM-PPM} for sequence 
            $s = a, b, c, c, d, b, c, d, c, b, c, b, c$ with context length $2$.} \label{fig:exampleTree}
        \end{figure*}
        
        \textbf{Example:}\\
            Assume a sequence $s = a, b, c, c, d, b, c, d, c, b, c, b, c$. We consider a 
            tree with depth $3$; i.e., it can consider a context length of up to 2. Tree 
            constructed based on Algorithm \ref{alg:KOM-PPM} is given in Fig \ref{fig:exampleTree}. 
            At each node, the letter inside the node denotes the symbol stored at that 
            node and the numeric denotes the frequency of occurrence of that  context. At 
            root node, i.e., the node with zero context length, the frequency will be the 
            sequence length - 13 in this example. At the end of this sequence, we have a 
            context $\{b,c\}$. Applying PPM, we can get the symbol prediction probabilities, 
            $\mathbb{P}(\cdot|\cdot)$, as $\mathbb{P}(a|bc) = 1/312$, $\mathbb{P}(b|bc) = 108/312$, 
            $\mathbb{P}(c|bc) = 97/312$ and $\mathbb{P}(d|bc) = 106/312$. Using \ref{eqn:expertPrediction},
            we get $\hat{d}^{(2)}[14] = b$.

        A pool of experts is created as explained above in Algorithm \ref{alg:KOM-PPM}
        with values of $k = 0, \ldots, K-1$ forming the set $\mathcal{K}$. Rather than maintaining
        different tries for each expert, all experts can co-exist in the largest depth trie -
        the trie with context length equal to $K-1$. This helps to keep the memory requirement low
        and also satisfies our assumption of having a common decision space for all the experts.
        
                
        \begin{theorem} \label{thm:alz_conv}
            Algorithm \ref{alg:KOM-PPM} attains Finite State(FS) convergence at the rate of 
            $O\left(\sqrt{\frac{1}{n}}\right)$.
        \end{theorem}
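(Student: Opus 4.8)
The plan is to reduce the claim to the classical theory of finite-state predictability of individual sequences (see \cite{Feder1992}) and then to control the rate at which the empirically-built trie of Algorithm \ref{alg:KOM-PPM} approximates the best fixed-order Markov predictor. Recall that FS predictability is the limiting error fraction of the best finite-state machine, and that for any sequence the order-$k$ Markov predictability decreases to the FS predictability as $k$ grows. Since our experts are fixed context-length PPM predictors built in the trie, it suffices to show that the sequentially estimated predictor converges to the optimal fixed-order Markov predictor --- the one that knows the true conditional statistics --- at rate $O(1/\sqrt{n})$; the residual gap between that optimal Markov predictor and FS predictability is absorbed into the choice of $K$ and does not affect the rate.

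First I would make the stationarity assumption explicit, treating the source as stationary and ergodic so that, for every recurrent context $c$, the number of visits $N(c)$ after $n$ symbols satisfies $N(c) = \Theta(n)$. The trie stores exactly the counts $N(c)$ and the per-symbol counts $N(d,c)$, and the PPM prediction in (\ref{eqn:expertPrediction}) is a fixed deterministic function of the empirical conditional distribution $\hat{P}(d \mid c) = N(d,c)/N(c)$. The core estimation step is to bound the deviation of $\hat{P}(d\mid c)$ from the true conditional $P(d\mid c)$. Writing each count as a sum of indicator variables along the sequence, the centred partial sums form a martingale with bounded increments, so an Azuma--Hoeffding argument (or, equivalently, a central-limit/ergodic-theorem argument under stationarity) yields
\begin{align} \label{eqn:concentration}
    \left\lvert \hat{P}(d\mid c) - P(d\mid c) \right\rvert = O\!\left( \frac{1}{\sqrt{N(c)}} \right) = O\!\left( \frac{1}{\sqrt{n}} \right).
\end{align}

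Next I would translate this estimation error into an error-rate guarantee. Because the predictor selects $\arg\max_{d} \hat{P}(d\mid c)$, its instantaneous error probability at context $c$ exceeds that of the optimal rule $\arg\max_{d} P(d\mid c)$ only on the event that estimation noise reverses the ranking of the leading symbols; the resulting excess is controlled by the total-variation distance between $\hat{P}(\cdot\mid c)$ and $P(\cdot\mid c)$, which by (\ref{eqn:concentration}) is $O(1/\sqrt{n})$. Averaging the instantaneous loss $l^{(k)}[n]$ over the $n$ time steps, and weighting contexts by their visit frequencies (which sum to one), the per-symbol excess error of Algorithm \ref{alg:KOM-PPM} over the optimal Markov predictor is $O(1/\sqrt{n})$, which establishes the claimed FS convergence rate.

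I expect the main obstacle to be the data-dependence of the contexts: the context observed at time $n$ is itself a function of the past sequence, so the counts are not sums of independent variables and standard i.i.d.\ concentration does not apply directly. The cleanest way around this is the martingale formulation above, but it requires verifying that the increments remain bounded and, crucially, that every relevant context is visited $\Theta(n)$ times --- this is precisely where the stationary-ergodic assumption enters. A secondary subtlety is the PPM escape mechanism, which blends estimates from several nested sub-contexts; one must check that this blending is a Lipschitz function of the underlying empirical distributions so that the $O(1/\sqrt{n})$ bound propagates through the combination without degrading the rate.
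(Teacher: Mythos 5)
Your proposal is correct in outline but takes a genuinely different route from the paper. The paper's proof is essentially a citation: it invokes Theorem 1 of \cite{Feder1992}, which says that a (randomized) sequential predictor attached to a fixed next-state function with $\chi$ states tracks the best-in-hindsight predictability of that machine within $\frac{\chi}{n}\sqrt{\frac{n}{\chi}+1}+\frac{1}{2n} = O\left(\sqrt{\chi/n}\right)$, and then observes that the order-$k$ trie has at most $\chi \leq |\mathcal{S}|^{k+1}$ states --- a constant --- so the rate is $O(1/\sqrt{n})$; like you, the authors simply assume away the gap between order-$k$ Markov predictability and FS predictability (``assumes that the optimal order Markov Model exists in the frequency trie''). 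You instead give a statistical plug-in analysis under a stationary-ergodic source: concentration of the empirical conditionals at rate $O(1/\sqrt{N(c)})$, a total-variation bound on the excess risk of the empirical $\arg\max$, and averaging over time. That is closer in spirit to the result of \cite{Cover1977a} that the authors mention immediately \emph{after} the theorem than to their actual proof. Each route buys something. The individual-sequence route is two lines and needs no stochastic assumptions, but the quantity it bounds is the expected error over the randomization in the prediction function $f$, whereas Algorithm \ref{alg:KOM-PPM} predicts deterministically by $\arg\max$ --- a mismatch the paper glosses over. Your route analyzes the deterministic $\arg\max$ predictor as actually implemented, but pays with the stationarity and ergodicity assumptions and leaves open the two items you correctly flag: that every relevant context recurs at linear rate (transient or rare contexts must be shown to contribute negligibly to the average loss), and that the PPM escape blending is Lipschitz in the empirical distributions so the $O(1/\sqrt{n})$ deviation survives the combination. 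Both items are plausible and standard, but they would need to be discharged explicitly for your sketch to become a complete proof.
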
 
            \begin{proof}
                A predictor can be defined by the pair $(f,g)$, where $f$ is the next symbol
                prediction function and $g(\cdot)$ is the next-state function. Let $\pi(g;s_1^n)$
                be the minimum fraction of prediction errors, $s_1^n$ be sequence. Also let $S$ be the states
                in the predictor. Then finite state predictability is defined as \cite{Feder1992},
                \begin{align}
                    \pi(x) = \underset{\chi \rightarrow \infty}{\lim}
                                \underset{n \rightarrow \infty}{\lim \sup}
                                    \underset{g \in G_{\chi}}{\min} \pi(g;s_1^n)
                \end{align}
                This is the minimum fraction of error a predictor makes over the set of 
                available \textit{next-state} functions $G_{\chi}$, when both the number of states $\chi$ and the
                sequence length $n$ tends to infinity. Let $\hat{\pi}(g;s_1^n)$ be the expected
                fraction of errors over the randomization in $f(\cdot)$. Then, by Theorem 1 of
                \cite{Feder1992}, 
                \begin{align}
                    \hat{\pi}(g;s_1^n) \leq \pi(g; s_1^n) + 
                                            \frac{\chi}{n} \sqrt{\frac{n}{\chi}+1} + \frac{1}{2n}
                \end{align}
                Thus, $\hat{\pi}(g;x_1^n)$ approaches $\pi(g;x_1^n)$ atleast as fast as
                $O\left(\frac{\chi}{n} \sqrt{\frac{n}{\chi}}\right) = O\left(\sqrt{\frac{\chi}{n}} \right)$.
                
                In the case of Algorithm \ref{alg:KOM-PPM}, the number of states is evolving and an upper
                bound on the number of states is $|\mathcal{S}^{k+1}|$ as $n \rightarrow
                \infty$. Substituting this upper bound, we can get $O(\sqrt{\frac{\chi}{n}}) =
                O(\sqrt{\frac{|\mathcal{S}^{k+1}|}{n}}) = O(\sqrt{\frac{1}{n}})$. Thus, we
                can conclude that Algorithm \ref{alg:KOM-PPM} converges to FS predictability at a rate of
                $O(\frac{1}{\sqrt{n}})$ under a stationary environment. This result assumes that the 
                optimal order Markov Model exists in the frequency trie.
            \end{proof}
            
        This result is consistent with the result derived in \cite{Cover1977a}, where the authors show
        the results of Bayes predictors for which the expected proportion of errors of a Bayes
        predictor differ from the observed $k^{th}$ order Markov structure by $O(n^{-1/2})$.
        

    \section{Adaptive Prediction}    \label{sec:ada_prediction}
        In this section, we describes the algorithm for combining predictions from multiple experts.
        As there is no one single best expert for the whole time of prediction, the final predictor
        is required to adaptively combine the expert predictions based on observed performance. We
        propose a modified version of HEDGE algorithm (See \cite{Freund1997}) with a forgetting
        (discounting) factor to deal with this problem of shifting experts. 
        \begin{algorithm}[!h]
            \caption{Discounted HEDGE with PPM}    \label{alg:discountedHEDGE}
            \begin{algorithmic}[1]
                \State \textbf{Parameters: } $ \beta \in (0,1]$, $\gamma \in (0,1]$ $K \in I^+$ 
                \State \textbf{Initialization: } Set $w_k(1) = W > 0 \:\forall\: k$.
                \For{ $n = 1,2,\ldots$ }
                    \For{ $k = 1,\ldots,K$ }
                        \State $p^{(k)}[n] \gets \frac{w_k[n]^{\gamma}}{\sum \limits_{j=1}^{K}w_j[n]^{\gamma}}$    \label{eqn:normWeighCalc}
                    \EndFor
                    \State Get symbol distribution from each expert $p_{k}[n]$
                    \State Calculate $\hat{\textbf{P}} = \textbf{p}[n] \cdot 
                            [ p_{1}[n] \ldots p_{K}[n] ] $
                    \State Predicted symbol $\hat{s}[n] = \underset{s}{\arg \max} \: \hat{\textbf{P}}$
                    \State Observe actual symbol $s[n]$
                    \State Calculate loss $l^{k}[n] \: \forall \: k \in \{1,\ldots,K\}$
                    \State Update weights as $w_{k}[n+1] \gets w_k[n]^{\gamma} \cdot \beta^{l^{(k)}[n]}$    \label{eqn:weightUpdate}
                \EndFor
            \end{algorithmic}
        \end{algorithm}
        
    
        By maintaining a set of weights over the experts and
        doing multiplicative weight updation based on the observed loss, HEDGE is able to give
        more weight to best performing experts and almost zero weight to non-performing
        experts. The weight updation requires a factor $\beta \in (0,1]$
        to be set prior hand. By setting this parameter appropriately, HEDGE can achieve a
         upper bound of $O(\sqrt{N \ln K})$ for regret. These weights can be normalized to get
        the required weight distribution over the pool of experts.
        
        To introduce an adaptive behaviour for combining the outputs from multiple predictors,
        Algorithm \ref{alg:discountedHEDGE} includes a discounting factor $\gamma$, and hence 
        give more importance
        to the experts which have been performing well in the recent past. We modify 
        weight update to include this \textit{forgetting effect}
        to the final predictor. The proposed algorithm is given in Algorithm \ref{alg:discountedHEDGE}.
        \textit{Discounted HEDGE with PPM} requests prediction from each expert at every time instant
        and combine those based on the maintained distribution over experts. After actual symbol
        is observed, individual losses are calculated and weights are updated based on the
        performance of each expert.
        
        Let $\tilde{P}_{W_{i:j}}$ be the escape proabability calculated by PPM for the symbols
        in context window from $i$ to $j$.
        Prediction from Algorithm \ref{alg:KOM-PPM} and from Algorithm \ref{alg:discountedHEDGE} can 
        be deduced to matrix operations as shown below.
        
        \begin{figure}  [!h] 
        \scalebox{0.78}{\parbox{\linewidth}{%
            \begin{align*} 
                \begin{pmatrix}
                    \hat{P}(s_1) \\
                    \hat{P}(s_2) \\
                    \vdots \\
                    \hat{P}(s_z) \\
                \end{pmatrix}
                &=
                \begin{pmatrix}
                    \mathbb{P}_{s_1|w_{n-1:n-K+1}} &\mathbb{P}_{s_1|w_{n-1:n-K+2}} &\cdots &\mathbb{P}_{s_1} \\
                    \mathbb{P}_{s_2|w_{n-1:n-K+1}} &\mathbb{P}_{s_2|w_{n-1:n-K+2}} &\cdots &\mathbb{P}_{s_2} \\
                    \vdots &\vdots &\ddots &\vdots \\
                    \mathbb{P}_{s_z|w_{n-1:n-K+1}} &\mathbb{P}_{s_z|w_{n-1:n-K+2}} &\cdots &\mathbb{P}_{s_z}
                \end{pmatrix} 
                \begin{pmatrix}
                    1 &0 &\cdots &0 \\
                    \tilde{P}_{W_{n-1:K-1}} &1 &\cdots &0 \\
                    \tilde{P}_{W_{n-1:K-1}}\tilde{P}_{W_{n-1:K-2}} &\tilde{P}_{W_{n-1}\cdots x_{K-2}} &\cdots &0 \\
                    \vdots &\vdots  &\ddots &\vdots \\
                    \cdots &\cdots  &\cdots &1
                \end{pmatrix}
                \begin{pmatrix}
                    p^{(K)} \\
                    p^{(K-1)} \\
                    \vdots \\
                    p^{(1)}
                \end{pmatrix} \label{eqn:mat_form}
            \end{align*}
        }}
        \end{figure}
        
    \section{Regret Upper Bound}    \label{sec:regret_analysis}
    
        Next we derive regret upper bound for Algorithm \ref{alg:discountedHEDGE}. Our algorithm 
        analysis follows the same method as in \cite{Freund1997} but with two key observations 
        from majorization theory.
        
        First step is to relate the probability distributions with two different discounting 
        exponents through majorization, as given below.        
        \begin{lemma} \label{lem:majorization}
            Let $p^{(k)}(\eta) = \frac{w_k^{\eta}}{\sum \limits_{j=1}^{K} w_j^{\eta}}$
            where $\eta \in (0,1]$,  we have $\mathbf{p}^{(k)}(\gamma^M) \prec \mathbf{p}^{(k)}(\gamma)$
            $\forall M \geq 1$. That is $\mathbf{p}(\gamma^M)$ is majorized by $\mathbf{p}(\gamma)$. 
        \end{lemma}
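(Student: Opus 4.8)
The plan is to reduce the vector majorization claim to a family of scalar partial-sum inequalities and then verify each one termwise. First I would note that since $\gamma \in (0,1]$ and $M \geq 1$ we have $0 < \gamma^M \leq \gamma \leq 1$. Writing $\alpha = \gamma^M$ and $\beta = \gamma$, the statement becomes the following monotonicity in the exponent: for any $0 < \alpha \leq \beta \leq 1$, the distribution $\mathbf{p}(\alpha)$ is majorized by $\mathbf{p}(\beta)$. Intuitively a larger exponent sharpens the softmax, so $\mathbf{p}(\gamma)$ is the more concentrated (majorizing) vector.

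A crucial simplification is that the componentwise ordering of $\mathbf{p}(\eta)$ is independent of $\eta$. Relabel the experts so that $w_1 \geq w_2 \geq \cdots \geq w_K > 0$ (the weights stay positive under the update in line~\ref{eqn:weightUpdate}). Since $x \mapsto x^{\eta}$ is increasing on $(0,\infty)$ for every $\eta > 0$, we get $w_1^{\eta} \geq \cdots \geq w_K^{\eta}$, hence $p^{(1)}(\eta) \geq \cdots \geq p^{(K)}(\eta)$ for all $\eta > 0$. Thus both $\mathbf{p}(\alpha)$ and $\mathbf{p}(\beta)$ are already sorted in decreasing order in this \emph{common} labeling, so no re-sorting is needed and, since each vector has total mass $1$, checking majorization reduces to verifying, for every $m \in \{1,\ldots,K-1\}$,
\[
\frac{\sum_{k=1}^m w_k^{\alpha}}{\sum_{k=1}^K w_k^{\alpha}} \le \frac{\sum_{k=1}^m w_k^{\beta}}{\sum_{k=1}^K w_k^{\beta}}.
\]

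Next I would clear the (positive) denominators and split each full sum $\sum_{k=1}^K$ into the head $\sum_{k=1}^m$ and the tail $\sum_{k=m+1}^K$. The head--head products cancel, leaving the equivalent cross-product inequality
\[
\Bigl(\sum_{i=1}^m w_i^{\alpha}\Bigr)\Bigl(\sum_{j=m+1}^K w_j^{\beta}\Bigr) \le \Bigl(\sum_{i=1}^m w_i^{\beta}\Bigr)\Bigl(\sum_{j=m+1}^K w_j^{\alpha}\Bigr).
\]
Expanding both sides as double sums over pairs $(i,j)$ with $i \le m < j$, it suffices to prove the termwise bound $w_i^{\alpha} w_j^{\beta} \le w_i^{\beta} w_j^{\alpha}$. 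This holds because
\[
w_i^{\beta} w_j^{\alpha} - w_i^{\alpha} w_j^{\beta} = w_i^{\alpha} w_j^{\alpha}\bigl(w_i^{\beta-\alpha} - w_j^{\beta-\alpha}\bigr) \ge 0,
\]
since $\beta - \alpha \ge 0$ and $w_i \ge w_j$ (as $i \le m < j$) force $w_i^{\beta-\alpha} \ge w_j^{\beta-\alpha}$. Summing over all such pairs gives the displayed cross-product inequality, and the lemma follows.

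The heart of the argument is the elementary termwise comparison for a single pair; the step I expect to require the most care is the reduction in the middle paragraph, namely justifying that the component order is $\eta$-independent so that the partial-sum form of majorization is directly applicable without re-sorting. If one preferred an analytic route, the same exponent-monotonicity could instead be obtained by differentiating $\sum_{k=1}^m p^{(k)}(\eta)$ in $\eta$ and recognizing the derivative as a (nonnegative) covariance, but the algebraic pairwise comparison above avoids calculus and handles ties among the $w_k$ cleanly.
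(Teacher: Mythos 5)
Your proof is correct. The paper itself disposes of this lemma in a single line: it observes that $\gamma^M \leq \gamma$ and then cites 5.B.2.b of Marshall and Olkin, which is exactly the monotonicity you establish, namely that the normalized power vector $\bigl(w_k^{\eta}/\sum_{j} w_j^{\eta}\bigr)_{k}$ increases in the majorization order as the exponent $\eta$ increases. You instead prove that fact from first principles, and every step checks out: since $x \mapsto x^{\eta}$ is increasing for every $\eta>0$, the decreasing rearrangement of $\mathbf{p}(\eta)$ is induced by one common permutation of the indices, so testing majorization on the raw partial sums in that single labeling is legitimate; the head--head cancellation correctly converts each partial-sum inequality into the cross-product inequality; and the atomic comparison $w_i^{\alpha} w_j^{\beta} \leq w_i^{\beta} w_j^{\alpha}$ for $w_i \geq w_j$ and $\alpha \leq \beta$ is exactly the right elementary fact, with ties and the boundary case $\alpha=\beta$ handled cleanly. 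The trade-off between the two routes is the usual one: the paper's citation is shorter and points to the general principle, while your argument is self-contained, makes explicit the positivity of the weights (which does hold under the algorithm's multiplicative update from a positive initialization, and which the citation leaves implicit), and would let a reader verify the lemma without consulting the reference. Your closing remark is also apt: the derivative of $\sum_{k=1}^{m} p^{(k)}(\eta)$ in $\eta$ is a covariance-type expression of two similarly ordered sequences, so the analytic route would go through as well, but the pairwise algebraic comparison is cleaner here.
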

        \begin{proof}
            Observing that $\gamma \in (0,1]$ and $\gamma^M \leq \gamma$ for all $M \geq 1$, this
            is a direct consequence of 5.B.2.b in \cite{Marshall2011}
        \end{proof}  
            
        \begin{lemma}    \label{lem:theInequality}
            Let the experts are ordered as $p^{(1)}(\gamma) \leq p^{(2)}(\gamma) \leq \ldots \leq 
            p^{(K)}(\gamma)$. If the instantaneous losses of the experts in same ordering obeys
            $l^{(1)} \geq l^{(2)} \geq \ldots \geq l^{(K)}$, then $\sum \limits_{k=1}^{K} 
            p^{(k)}(\gamma^M) \cdot l^{(k)} \geq \sum \limits_{k=1}^{K} p^{(k)}(\gamma)
            \cdot l^{(k)}$
        \end{lemma}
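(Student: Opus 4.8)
The plan is to reduce the claimed inequality to a statement about partial sums of the difference of the two weight distributions, and then close it with summation by parts. Writing $a_k = p^{(k)}(\gamma^M)$ and $b_k = p^{(k)}(\gamma)$, the goal is to show $\sum_{k=1}^{K}(a_k - b_k)\, l^{(k)} \ge 0$; note that both vectors are probability distributions, so $\sum_{k=1}^{K}(a_k - b_k) = 0$.

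First I would establish that the two distributions are monotone in $k$ with the same orientation. The hypothesis $p^{(1)}(\gamma) \le \cdots \le p^{(K)}(\gamma)$ forces $w_1 \le \cdots \le w_K$, since $p^{(k)}(\gamma) = w_k^{\gamma}/\sum_j w_j^{\gamma}$ has a common positive normalizer and $t \mapsto t^{1/\gamma}$ is increasing on $[0,\infty)$. For any positive exponent $\eta$ the map $t \mapsto t^{\eta}$ is increasing, so $p^{(k)}(\eta)$ inherits this increasing order; in particular $a_k = p^{(k)}(\gamma^M)$ is also increasing in $k$. Hence the index $k = 1,\dots,K$ simultaneously sorts both probability vectors in ascending order while sorting the loss vector $l^{(1)} \ge \cdots \ge l^{(K)}$ in descending order.

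With this alignment in place, Lemma \ref{lem:majorization} gives $\mathbf{p}(\gamma^M) \prec \mathbf{p}(\gamma)$, which I would translate into partial-sum form. Because both vectors are sorted ascending under the index $k$, the majorization relation is equivalent to $\sum_{k=1}^{j} b_k \le \sum_{k=1}^{j} a_k$ for every $j$, with equality at $j = K$. Setting $D_j = \sum_{k=1}^{j}(a_k - b_k)$, this reads $D_j \ge 0$ for all $j$, together with $D_0 = D_K = 0$. Abel summation (using $a_k - b_k = D_k - D_{k-1}$) then yields
\[
\sum_{k=1}^{K}(a_k - b_k)\, l^{(k)} = \sum_{k=1}^{K-1} D_k\,\bigl(l^{(k)} - l^{(k+1)}\bigr),
\]
since the boundary contributions vanish because $D_0 = D_K = 0$. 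Each $D_k$ is non-negative by the majorization step and each gap $l^{(k)} - l^{(k+1)}$ is non-negative because the losses are decreasing, so every summand is non-negative and the claim follows.

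The one step that needs care is the orientation bookkeeping in the second and third paragraphs: the partial-sum inequalities delivered by majorization only line up with the given index set once both distributions and the losses are known to be monotone in the same index. Thus verifying that the common weight ordering $w_1 \le \cdots \le w_K$ transfers across the two exponents $\gamma$ and $\gamma^M$ is the crux of the argument; once that alignment is secured, the summation-by-parts identity and the final sign check are routine.
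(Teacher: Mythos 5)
Your proof is correct and is essentially the paper's own argument in different notation: the paper also transfers the ascending order across exponents, converts the majorization relation into the partial-sum inequalities $\sum_{i=1}^{k} p^{(i)}(\gamma^M) \ge \sum_{i=1}^{k} p^{(i)}(\gamma)$, and then takes a non-negative combination of these (writing the decreasing losses as $l^{(k)} = \sum_{j=k}^{K} m_j$ with $m_j \ge 0$), which is precisely your Abel-summation identity read in reverse. No substantive difference; your version just makes the summation-by-parts bookkeeping and the weight-ordering transfer slightly more explicit.
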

        \begin{proof}
            When $p^{(1)}(\gamma) \leq p^{(2)}(\gamma) \leq \ldots \leq p^{(K)}(\gamma)$, we can 
            directly observe that $p^{(1)}(\gamma^M) \leq p^{(2)}(\gamma^M) \leq \ldots 
            \leq p^{(K)}(\gamma^M)$. By Lemma \ref{lem:majorization}, $\mathbf{p}(\gamma^M)
            \prec \mathbf{p}(\gamma)$. Hence we have,
            \begin{align}
                \sum \limits_{i=1}^{k} p^{(i)}(\gamma^M) 
                    &\geq \sum \limits_{i=1}^{k} p^{(i)}(\gamma), \quad \forall \: k = 1,\ldots, K-1 \nonumber 
                \qquad \text{and} \qquad
                \sum \limits_{i=1}^{K} p^{(i)}(\gamma^M) 
                    &= \sum \limits_{i=1}^{K} p^{(i)}(\gamma). \nonumber
            \end{align}
            Let $m_j$ be some non-negative numbers. Then consider the sequence on inequalities, 
            \begin{align*}
                m_1 \cdot p^{(1)}(\gamma^M) &\geq m_1 \cdot p^{(1)}(\gamma) \nonumber \\
                m_2 \cdot \left( p^{(1)}(\gamma^M) + p^{(2)}(\gamma^M) \right)
                    &\geq m_2 \cdot \left( p^{(1)}(\gamma) + p^{(2)}(\gamma) \right) \nonumber \\
                \vdots \nonumber \\
                m_{K-1} \left( \sum \limits_{k=1}^{K-1} p^{(k)}(\gamma^M) \right)
                    &\geq m_{K-1} \cdot \left( \sum \limits_{k=1}^{K-1} p^{(k)}(\gamma) \right)  \nonumber \\
                m_K \left( \sum \limits_{k=1}^{K} p^{(k)}(\gamma^M) \right)
                    &= m_K \cdot \left( \sum \limits_{k=1}^{K} p^{(k)}(\gamma) \right) \nonumber \\
                \intertext{Summing over all of them,}
                \sum \limits_{k=1}^{K} \left( \sum \limits_{j=k}^{K} m_j \right) \cdot p^{(k)}(\gamma^M)
                    &\geq \sum \limits_{k=1}^{K} \left( \sum \limits_{j=k}^{K} m_j \right) \cdot p^{(k)}(\gamma)
            \end{align*}
            Taking $\sum \limits_{j=k}^{K} m_j = l^{(k)}$ and noting that
                $\sum \limits_{j=k}^{K} m_j \geq \sum \limits_{j=k+1}^{K} m_j$, we have
                $ \sum \limits_{k=1}^{K} l^{(k)} \cdot p^{(k)}(\gamma^M)
                    \geq \sum \limits_{k=1}^{K} l^{(k)} \cdot p^{(k)}(\gamma) $.
            This completes the proof.
        \end{proof}      
        \begin{lemma}
            Loss incurred by Discounted HEDGE algorithm can be upper bounded by the loss
            of best expert in pool as
            \begin{align}
                L_N(\gamma) &\leq \frac{\ln(1/\beta)}{1-\beta} L_{k^*,N}(\gamma) +
                                                                  \frac{\ln K}{1-\beta}.
            \end{align}
        \end{lemma}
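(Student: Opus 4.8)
The plan is to follow the potential-function argument of \cite{Freund1997}, adapted to the discounted weight update. First I would unroll the recursion in line (\ref{eqn:weightUpdate}) to obtain the closed form $w_k[n+1]=W^{\gamma^{n}}\beta^{L_{k,n}(\gamma)}$, which can be checked by induction using the identity $\gamma L_{k,n-1}(\gamma)+l^{(k)}[n]=L_{k,n}(\gamma)$. This identity is the heart of the argument: the exponential discounting in (\ref{eqn:discExpertLoss}) is reproduced exactly by raising the previous weight to the power $\gamma$ at every step, so the discounted cumulative loss of each expert ends up sitting in the exponent of $\beta$ in its weight.

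Next I would introduce the partition functions $\Phi_n=\sum_{k}w_k[n]$ and $V_n=\sum_k w_k[n]^{\gamma}$, noting that $p^{(k)}[n]=w_k[n]^{\gamma}/V_n$ and hence the instantaneous predictor loss is $l[n]=\mathbf{p}[n]\cdot\mathbf{l}[n]=V_n^{-1}\sum_k w_k[n]^{\gamma}l^{(k)}[n]$. Applying the elementary bound $\beta^{x}\le 1-(1-\beta)x$ for $x\in[0,1]$ to each term of $\Phi_{n+1}=\sum_k w_k[n]^{\gamma}\beta^{l^{(k)}[n]}$ gives the one-step estimate $\Phi_{n+1}\le V_n\bigl(1-(1-\beta)l[n]\bigr)$. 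The difficulty, and the place where majorization enters, is that this mixes the two partition functions: the normalization is built from the $\gamma$-exponentiated weights, whereas I want a recursion purely in $\Phi$. The bridge is the Schur-concavity fact that, since $x\mapsto x^{\gamma}$ is concave for $\gamma\in(0,1]$, the map $\mathbf{w}\mapsto\sum_k w_k^{\gamma}$ is Schur-concave and is therefore maximized at the uniform vector of the same total mass; this yields $V_n\le K^{1-\gamma}\Phi_n^{\gamma}$, which is the quantitative form of the comparison between exponents that underlies Lemma \ref{lem:majorization} and Lemma \ref{lem:theInequality}.

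Substituting this bound and taking logarithms turns the one-step estimate into the affine recursion $\ln\Phi_{n+1}\le\gamma\ln\Phi_n-(1-\beta)l[n]+(1-\gamma)\ln K$, where I also use $\ln(1-u)\le-u$. Unrolling from $n=1$ to $N$ produces geometric weights $\gamma^{N-n}$ on the loss terms, so the accumulated loss is precisely $L_N(\gamma)=\sum_n\gamma^{N-n}l[n]$, while the constant terms sum to $(1-\gamma)\ln K\,\frac{1-\gamma^{N}}{1-\gamma}=(1-\gamma^{N})\ln K$. Together with $\ln\Phi_1=\ln(KW)$ this gives $\ln\Phi_{N+1}\le\gamma^{N}\ln W-(1-\beta)L_N(\gamma)+\ln K$. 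Finally I would lower bound $\Phi_{N+1}\ge w_{k^*}[N+1]=W^{\gamma^{N}}\beta^{L_{k^*,N}(\gamma)}$ for the best expert $k^*$, take logarithms, and cancel the common $\gamma^{N}\ln W$; the surviving inequality $(1-\beta)L_N(\gamma)\le L_{k^*,N}(\gamma)\ln(1/\beta)+\ln K$ rearranges into the claim. I expect the main obstacle to be the second step: confirming that the $\gamma$-exponent in the normalization can be absorbed at the uniform cost $K^{1-\gamma}$, and that the resulting per-step slack telescopes so cleanly that the geometric sum collapses to exactly $\ln K$, leaving no spurious $\gamma$-dependence in the additive constant.
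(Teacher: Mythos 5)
Your proof is correct, and it takes a genuinely different route from the paper's at the one place where the argument is delicate. Both proofs share the same skeleton: unroll the update to get $w_k[N+1]=W^{\gamma^N}\beta^{L_{k,N}(\gamma)}$, apply $\beta^{x}\le 1-(1-\beta)x$ termwise to obtain $\sum_k w_k[n+1]\le\bigl(\sum_k w_k[n]^{\gamma}\bigr)\bigl(1-(1-\beta)\,\mathbf{p}[n]\cdot\mathbf{l}[n]\bigr)$, and finish by lower-bounding the final weight sum by the best expert's weight. The divergence is in how the mismatch between $\Phi_n=\sum_k w_k[n]$ and $V_n=\sum_k w_k[n]^{\gamma}$ is handled. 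The paper expands the product recursively, which produces normalizations with exponents $\gamma^{N-n+1}$ rather than $\gamma$, and then invokes Lemma \ref{lem:theInequality} to compare these to the instantaneous probabilities; that lemma only applies under the assumption that the experts' weights and instantaneous losses are oppositely ordered at every step --- an assumption the paper itself concedes (in the experiments section) is not always met. You instead close the recursion in a single potential function via the power-mean/Schur-concavity bound $V_n\le K^{1-\gamma}\Phi_n^{\gamma}$, which holds unconditionally, and the resulting affine recursion $\ln\Phi_{n+1}\le\gamma\ln\Phi_n-(1-\beta)l[n]+(1-\gamma)\ln K$ unrolls so that the per-step slack telescopes to exactly $\gamma^N\ln W+\ln K$, reproducing the paper's constant with no extra $\gamma$-dependence. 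The net effect is that your argument proves the same bound without the ordering hypothesis, so it is not merely an alternative derivation but a strictly cleaner one; the only ingredients are Bernoulli's inequality on $[0,1]$-valued losses and Jensen applied to the concave map $x\mapsto x^{\gamma}$, and every intermediate identity you cite ($\gamma L_{k,n-1}(\gamma)+l^{(k)}[n]=L_{k,n}(\gamma)$, the collapse of $(1-\gamma)\ln K\cdot\frac{1-\gamma^N}{1-\gamma}+\gamma^N\ln K$ to $\ln K$, and the degenerate case $\gamma=1$ where $V_n=\Phi_n$) checks out.
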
  \label{lem:disc_hedge_conv}
            \begin{proof}
                Our proof is partly based on the analysis of $HEDGE(\beta)$ (See \cite{Freund1997}).
                But the method of discounting we have introduced to the HEDGE algorithm leads 
                to certain technical difficulties in the proof which are addressed using 
                Lemma \ref{lem:majorization} and Lemma \ref{lem:theInequality}. Due to space
                constraints, only the key steps of the proof is provided below. For complete version
                of the proof, refer supplementary material.
            
                From Equation \ref{eqn:discExpertLoss} and Step \ref{eqn:weightUpdate} of 
                Algorithm \ref{alg:discountedHEDGE}, we get
                $
                    w_k[N+1] 
                        = w_k[1]^{\gamma^N} \cdot \beta^{L_{k,N}(\gamma)}  
                $.
                Consider the sum of weights of all experts at time instant $n+1$.
                \begin{align}
                    \sum \limits_{k=1}^{K} w_{k}[n+1] 
                        &= \sum \limits_{k=1}^{K} (w_{k}[n])^{\gamma} \cdot \beta^{l^{(k)}[n]}.  \label{eqn:sumOfWeights}
                \end{align}
                Applying Bernoulli's Inequality, we get
                \begin{align}
                    \sum \limits_{k=1}^{K} w_{k}[n+1]
                        &\leq \sum \limits_{k=1}^{K} (w_{k}[n])^{\gamma} - 
                            (1-\beta) \sum \limits_{k=1}^{K} (w_{k}[n])^{\gamma} \cdot l^{(k)}[n] \nonumber
                \end{align}
                Noting that
                $p^{(k)}[n] = \frac{w_k[n]}{\sum \limits_{j=1}^{K} w_j[n]}$ and applying
                Lemma \ref{lem:theInequality}, we can write
                \begin{align}
                    \sum \limits_{k=1}^{K} w_{k}[n+1] 
                        &\leq \left( \sum \limits_{k=1}^{K} (w_k[n])^{\gamma} \right) \cdot
                               \left( 1 -(1-\beta) \textbf{p}[n] \cdot \textbf{l}[n] \right) \nonumber
                \end{align}
                Applying $1+x \leq \exp(x)$ and expanding the terms recursively,
                \begin{align}
                    \sum \limits_{k=1}^{K} w_{k}[N+1] 
                        &\leq \sum \limits_{k=1}^{K} (w_{k}[1])^{\gamma^N}
                            \prod \limits_{n=1}^{N-1} \left( 1 - (1-\beta) \cdot \left( \sum \limits_{k=1}^{K} 
                            \frac{(w_k[n])^{\gamma^{N-n+1}}}{\sum \limits_{j=1}^{K} (w_j[n])^{\gamma^{N-n+1}}} \cdot l^{(k)}[n]\right) \right) \nonumber \\
                        &\qquad \qquad \qquad \qquad \qquad \qquad \qquad \qquad \qquad \cdot \exp\left( -(1-\beta) \textbf{p}[N] \cdot \textbf{l}[N] \right) \label{eq:prodInequality}
                \end{align}
                Because of the discounting that has been introduced to the HEDGE algorithm, we get terms of the
                form $\frac{(w_k[n])^{\gamma^{N-n+1}}}{\sum \limits_{j=1}^{K} (w_j[n])^{\gamma^{N-n+1}}}$,
                which cannot be readily used to calculate the expected loss of final predictor, as done
                in \cite{Freund1997}. 
                Without loss of generality, when the experts are arranged in ascending
                order of their weights, if their instantaneous losses follow a descending pattern then 
                by Lemma \ref{lem:theInequality}, we can write
                \begin{align}
                    \sum \limits_{k=1}^{K} w_{k}[N+1] 
                        &\leq \left( \sum \limits_{k=1}^{K} w_k[N-1]^{\gamma^{2}} \right)
                            \exp\left( -\gamma (1-\beta) \textbf{p}[N-1] \cdot \textbf{l}[N-1] \right)
                            \exp\left( -(1-\beta) \textbf{p}[N] \cdot \textbf{l}[N] \right) \nonumber
                \end{align}
                Combining the product terms and simplifying, we get               
                \begin{align}
                    \sum \limits_{k=1}^{K} w_{k}[N+1]   
                        &= \left( \sum \limits_{k=1}^{K} w_k[1]^{\gamma^N} \right) 
                               \exp \left( -(1-\beta) \cdot L_{N}(\gamma) \right) \nonumber
                               \quad (\because \text{Eqn.\ref{eqn:predictorLoss}})\nonumber
                \end{align}
                Taking logarithm on both sides and rearranging,
                \begin{align}
                    L_N(\gamma) 
                        &\leq - \frac{
                            \ln \left( \sum \limits_{k=1}^{K} w_k[N+1] \right) - 
                                \ln \left( \sum \limits_{k=1}^{K} w_k[1]^{\gamma^N} \right)
                        }{1-\beta}    \label{eqn:LN_midway}
                \end{align}
                Let $k^{*} = \underset{k \in \mathcal{K}}{\arg \min} \:L_{k,N}(\gamma)$ be the set index
                for the best expert in collection $\mathcal{K}$. Then we have
                $
                   \sum \limits_{k=1}^{K} w_k[N+1] 
                       \geq L_{k^*,N}(\gamma) \ln(\beta) + \ln \left( w_{k^*}[1]^{\gamma^N} \right) \nonumber
                $.
                Applying this to (\ref{eqn:LN_midway}) and by setting 
                $w_k[1] = W \:\forall\: k \in \mathcal{K}$ with $ W > 0$, we get
                \begin{align}
                    L_N(\gamma)
                        &\leq \frac{\ln(1/\beta)}{1-\beta} L_{k^*,N}(\gamma) +
                                    \frac{\ln K}{1-\beta}        \label{eqn:lossBoundRaw}
                \end{align}
                This completes the proof.
            \end{proof}

            (\ref{eqn:lossBoundRaw}) is similar in structure to the the loss bound of HEDGE algorithm.
            Next, we prove that by optimally setting the value of $\beta$, we can bound this to 
            linear term.

        \begin{theorem}
            By optimally setting value of $\beta$, Net Loss of PPM-HEDGE algorithm can be bounded 
            by $O(\sqrt{N \ln K})$.
        \end{theorem}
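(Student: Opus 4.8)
The plan is to start from the loss bound established in Lemma~\ref{lem:disc_hedge_conv} and turn it into a regret bound by optimizing over the free parameter $\beta$, exactly mirroring the corollary to $\mathrm{HEDGE}(\beta)$ in \cite{Freund1997}. Writing the net loss (regret) as $R_N = L_N(\gamma) - L_{k^*,N}(\gamma)$ and subtracting $L_{k^*,N}(\gamma)$ from both sides of the inequality in Lemma~\ref{lem:disc_hedge_conv}, I would first obtain
\begin{align}
    R_N \leq \left( \frac{\ln(1/\beta)}{1-\beta} - 1 \right) L_{k^*,N}(\gamma) + \frac{\ln K}{1-\beta}.
\end{align}
The entire task then reduces to choosing $\beta$ so that the right-hand side is $O(\sqrt{N\ln K})$.

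Second, I would control the excess factor multiplying $L_{k^*,N}(\gamma)$. Substituting $\beta = e^{-\eta}$ so that $\ln(1/\beta) = \eta$, and using the elementary estimates $e^{-\eta} \le 1 - \eta + \tfrac12\eta^2$ and $1 - e^{-\eta} \ge \eta - \tfrac12\eta^2$, one shows that $\ln(1/\beta) - (1-\beta) = \eta + e^{-\eta} - 1 \le \tfrac12\eta^2$, so that
\begin{align}
    \frac{\ln(1/\beta)}{1-\beta} - 1 \;=\; \frac{\ln(1/\beta)-(1-\beta)}{1-\beta} \;\leq\; \frac{\eta/2}{1-\eta/2},
\end{align}
which is of order $\eta$ for small $\eta$, while the remaining term satisfies $\frac{\ln K}{1-\beta} \le \frac{\ln K}{\eta(1-\eta/2)} = O(\ln K/\eta)$.

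Third, I need a uniform bound on the best expert's loss in order to replace $L_{k^*,N}(\gamma)$ by the horizon $N$. Since the instantaneous losses lie in $[0,1]$ and $\gamma \in (0,1]$, every discount factor obeys $\gamma^{N-n} \le 1$, whence $L_{k^*,N}(\gamma) = \sum_{n=1}^N \gamma^{N-n}\,l^{(k^*)}[n] \le \sum_{n=1}^N l^{(k^*)}[n] \le N$. Feeding this and the estimates of the previous step into the regret inequality gives, to leading order, $R_N \le \tfrac12\eta N + \ln K/\eta$. Balancing the two contributions yields the optimal choice $\eta = \sqrt{2\ln K/N}$, equivalently $\beta = 1/(1 + \sqrt{2\ln K/N})$, for which the bound becomes $R_N \le \sqrt{2N\ln K} = O(\sqrt{N\ln K})$.

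I expect the main obstacle to be confirming that the discounting does not interfere with the Freund--Schapire optimization rather than any new calculation. In particular, one must check that $L_{k^*,N}(\gamma)\le N$ is the appropriate quantity to insert (the $\gamma$-weighting only shrinks the cumulative loss, so this is safe), and that the tuned value $\beta = 1/(1+\sqrt{2\ln K/N})$ is still consistent with all the intermediate inequalities invoked in Lemma~\ref{lem:disc_hedge_conv}, namely Bernoulli's inequality, $1+x\le e^x$, and the majorization step of Lemma~\ref{lem:theInequality}; once these are in place, the remainder is the routine calculus of minimizing $\tfrac12\eta N + \ln K/\eta$.
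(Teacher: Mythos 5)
Your proposal is correct and arrives at the same final tuning $\beta = 1/(1+\sqrt{2\ln K/N})$ as the paper, but the execution differs in two respects. The paper does not redo the calculus: it invokes Lemma~4 of Freund and Schapire as a black box, applied to the bound (\ref{eqn:lossBoundRaw}) with $R = \tilde{R} = \ln K$ and, crucially, with $\tilde{L} = \frac{1-\gamma^N}{1-\gamma}$ rather than your cruder $\tilde{L} = N$. Your elementary derivation via $\beta = e^{-\eta}$, the expansions $\eta + e^{-\eta} - 1 \le \eta^2/2$ and $1-e^{-\eta} \ge \eta(1-\eta/2)$, and the balancing $\eta = \sqrt{2\ln K/N}$ is essentially a re-proof of that lemma and is sound; the only blemish is the claim that $\eta = \sqrt{2\ln K/N}$ under the substitution $\beta = e^{-\eta}$ is ``equivalent'' to $\beta = 1/(1+\sqrt{2\ln K/N})$ --- these two parametrizations give different numerical values of $\beta$, though either yields $O(\sqrt{N\ln K})$, so nothing breaks. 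What the paper's choice of $\tilde{L}$ buys is a strictly sharper statement for $\gamma < 1$: since $L_{k^*,N}(\gamma) \le \sum_{n=0}^{N-1}\gamma^{n} = \frac{1-\gamma^N}{1-\gamma} \le \frac{1}{1-\gamma}$, the resulting bound $L_{k^*,N}(\gamma) + \sqrt{2\ln K \cdot \frac{1-\gamma^N}{1-\gamma}} + \ln K$ has a regret term that stays bounded in $N$ whenever $\gamma$ is bounded away from $1$, and degenerates to your $\sqrt{2N\ln K}$ only in the limiting case $\gamma = 1$. Your closing worry about the tuned $\beta$ being consistent with the intermediate inequalities of the lemma is unfounded: Bernoulli's inequality, $1+x\le e^x$, and the majorization step all hold for every $\beta \in (0,1]$, so no additional check is required.
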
 \label{thm:discHedgebound}
            \begin{proof}
                From Lemma 4 in \cite{Freund1997}, \textit{Suppose $0 \leq L \leq \tilde{L}$ and $0 \leq R \leq
                \tilde{R}$ and $\beta = g(\tilde{L}/\tilde{R})$ where $g(z) = 1 /(1 + \sqrt{2/z})$, then}
                \begin{align}
                    \frac{-L \ln(\beta) + R}{1-\beta} \leq L + \sqrt{2 \tilde{L} \tilde{R}} + R    \label{eqn:optiBetaLemma}
                \end{align}
                Taking $L \leq L_{k^*,N}(\gamma)$ and $R = \ln(K)$, (\ref{eqn:lossBoundRaw}) can be
                rewritten as,
                \begin{align*}
                    L_{N}(\gamma) = L_{k^*,N}(\gamma) + \sqrt{2 \tilde{L} \tilde{R}} + \ln(K)
                \end{align*}
                From (\ref{eqn:discExpertLoss}), we get
                $
                    L_{k,N} = \sum \limits_{n=1}^{N} \gamma^{N-n} l^{(k)}[n] 
                            \leq \sum \limits_{n=1}^{N} \gamma^{N-n}
                            = \sum \limits_{n=0}^{N-1} \gamma^{n}
                            = \frac{1-\gamma^N}{1-\gamma}
                $. 
                This is the maximum value that the loss can take and hence, 
                $
                    \tilde{L} = \frac{1-\gamma^N}{1-\gamma}
                $.
                As a limiting case, when $\gamma = 1.00$,
                $
                    \tilde{L} = \lim \limits_{\gamma \rightarrow 1} \frac{1-\gamma^N}{1-\gamma}
                              = N        
                $.
                For a particular prediction task, $R$ is fixed and hence, 
                $
                    \tilde{R} = \ln (K) 
                $.
                This will give the optimal value of $\beta$ as
                \begin{align}
                    \beta &= 
                        \begin{cases}
                            \frac{1}{1+\sqrt{2 \cdot \ln(K) \cdot \frac{1-\gamma}{1-\gamma^N}}} &; \gamma \in [0,1) \\
                            \frac{1}{1+\sqrt{2 \cdot \ln(K) / N}}    &; \gamma = 1
                    \end{cases}    \label{eqn:optimalBeta}
                \end{align}
                Applying above results in (\ref{eqn:lossBoundRaw}), we get
                \begin{align}
                    L_N(\gamma) &\leq
                        \begin{cases}
                            L_{k^*,N}(\gamma) + \sqrt{2 \cdot \ln(K) \cdot \frac{1-\gamma^N}{1-\gamma}} + \ln(K)
                                                                                         &;\gamma \in [0,1)\\
                            L_{k^*,N}(\gamma) + \sqrt{2 \cdot \ln(K) \cdot N} + \ln(K)   &; \gamma = 1
                        \end{cases}    \label{eqn:upperBound}
                \end{align}
                This completes the proof.
            \end{proof}

        \begin{corollary}
            By setting $\gamma = 1$, Discounted HEDGE algorithm becomes HEDGE with no forgetting
            factor.
        \end{corollary}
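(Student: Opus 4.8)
The plan is to verify, by direct substitution, that setting $\gamma = 1$ collapses every $\gamma$-dependent component of Algorithm \ref{alg:discountedHEDGE} onto the corresponding component of the original $HEDGE(\beta)$ of \cite{Freund1997}. Since the two algorithms differ \emph{only} through the discounting exponent $\gamma$ appearing in the normalization and weight-update steps, no genuine analysis is required; the work is entirely in checking that the reductions are exact.

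First I would examine the weight normalization in Step \ref{eqn:normWeighCalc}: with $\gamma = 1$ the expression $p^{(k)}[n] = \frac{w_k[n]^{\gamma}}{\sum_{j=1}^{K} w_j[n]^{\gamma}}$ becomes $p^{(k)}[n] = \frac{w_k[n]}{\sum_{j=1}^{K} w_j[n]}$, which is precisely the simple normalization used by HEDGE. Next I would turn to the weight update in Step \ref{eqn:weightUpdate}: the rule $w_k[n+1] = w_k[n]^{\gamma}\cdot \beta^{l^{(k)}[n]}$ collapses to the multiplicative update $w_k[n+1] = w_k[n]\cdot \beta^{l^{(k)}[n]}$, again matching HEDGE exactly. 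Together these two observations establish that the forgetting mechanism is disabled and that the remaining steps (querying the experts, forming the combined prediction, and incurring the loss) are unaffected by the choice of $\gamma$.

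Finally I would confirm that the loss functional against which regret is measured degenerates correctly. Substituting $\gamma = 1$ into the discounted cumulative loss (\ref{eqn:discExpertLoss}) gives $L_{k,N}(1) = \sum_{n=1}^{N} l^{(k)}[n]$, the ordinary cumulative loss, so the objective (\ref{exp:objective}) reduces to the standard best-expert regret. As a consistency check I would note that the $\gamma = 1$ branch of the regret bound (\ref{eqn:upperBound}) recovers the classical $O(\sqrt{N \ln K})$ guarantee of \cite{Freund1997}. There is no real obstacle here; the only point deserving a remark is that Lemma \ref{lem:majorization} and Lemma \ref{lem:theInequality}, which were needed to control the general $\gamma < 1$ case, become vacuous at $\gamma = 1$, because $\gamma^{M} = \gamma$ forces the majorization $\mathbf{p}(\gamma^M) \prec \mathbf{p}(\gamma)$ to hold with equality, so the general argument specializes with no extra structural reasoning.
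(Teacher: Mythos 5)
Your verification is correct: the paper states this corollary without proof, and the direct-substitution check you carry out (the normalization and update steps of Algorithm \ref{alg:discountedHEDGE} reduce to those of $HEDGE(\beta)$, the discounted loss $L_{k,N}(1)$ becomes the ordinary cumulative loss, and the $\gamma=1$ branch of (\ref{eqn:upperBound}) recovers the $O(\sqrt{N\ln K})$ bound) is exactly the intended argument. Your closing remark that Lemma \ref{lem:majorization} and Lemma \ref{lem:theInequality} degenerate to equalities at $\gamma=1$ is also accurate and a useful sanity check.
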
   
    
    
    \section{Experimental Results}    \label{sec:experimental_results}
    
        To evaluate usefulness of the proposed method, this section provides comparison of
         the proposed algorithm with six other widely used and state-of-the-art algorithms -
        LZ78 (See \cite{Ziv1978}),
        Dependency Graphs (See \cite{Padmanabhan1996}),
        LeZi Update (See \cite{Bhattacharya2002}), 
        Active LeZi (See \cite{Gopalratnam2003}), 
        Error Weighted PPM (See \cite{Pulliyakode2014}) (referred as \textit{ewPPM} in results) and 
        Adaptive MPP (See \cite{Vladimir2017}). We show the results 
        on four different real life datasets - Reality Mining Dataset (RM) (See \cite{Eagle2006}),
        Building Activity (BA) (See \cite{casas_wsu}), Cognitive Assessment (CA) (See \cite{casas_wsu})
        and a proprietary dataset of Modulation Scheme prediction (MCS) from a LTE cellular
        network comprising of 19 cells, 3 sector layout containing MCS values for 570 users
        corresponding to the rate feedback from the cellphones. In the results, \textit{dHedgePPM} refers
        to the proposed method. For ewPPM, Adaptive MPP, and dHedge, we used a model built according
        to Algorithm \ref{alg:KOM-PPM} with $K=4$ and hence, have $5$ experts to predict with.
        
        \subsection{Loss Model for Experiments}
            In the derivation of the bound, the loss function is defined as $\mathcal{L}(x,y) \in [0,1]$,
            where $x$ is the observed symbol and $y$ is the predicted symbol.
            This enables us to use any loss function satisfying the above criteria and derive
            different bounds based on the applications. For the purpose of validating the claims
            presented above, and in order to prove the utility of the algorithm, we are using the
            following discrete loss function for the experiments mentioned below.
            \begin{align}    \label{eqn:lossFunction}
                \mathcal{L}(x,y) =  \llbracket x \neq y \rrbracket
            \end{align}
            where $\llbracket \cdot \rrbracket$ is the indicator function. Hence $\mathcal{L}(x,y)
            \in \{0,1\}$. 
            
            Even though we made some strict assumptions about the ordering of the weights of the
            experts and their corresponding instantaneous 
            losses, our experiments show that the results hold even when these conditions are not 
            always met. This shows the possibility of having a wider set of scenarios where the 
            proposed analysis can hold.

        \subsection{Performance Comparison}
            In order to compare the accuracy evolution with time, we considered a prediction
            task on first $5000$ symbols of each of the sequences in all the datasets except
            for MCS dataset. In MCS dataset, there are $210$ sequences of $992$ symbols each.
            For RM, BA, and CA datasets, we have $95$, $9$ and $350$ sequences respectively. For the
            algorithms which require a context length to work with, we set it to $4$ after
            making a few empirical observations. We provide the result of average accuracy over all the
            sequences in the dataset with respect to time.
            Figure \ref{fig:results_accuracy} shows
            the results.
            
            \begin{figure*}[t]
                \begin{subfigure}[h]{0.49\textwidth}
                    \includegraphics[width=\linewidth]{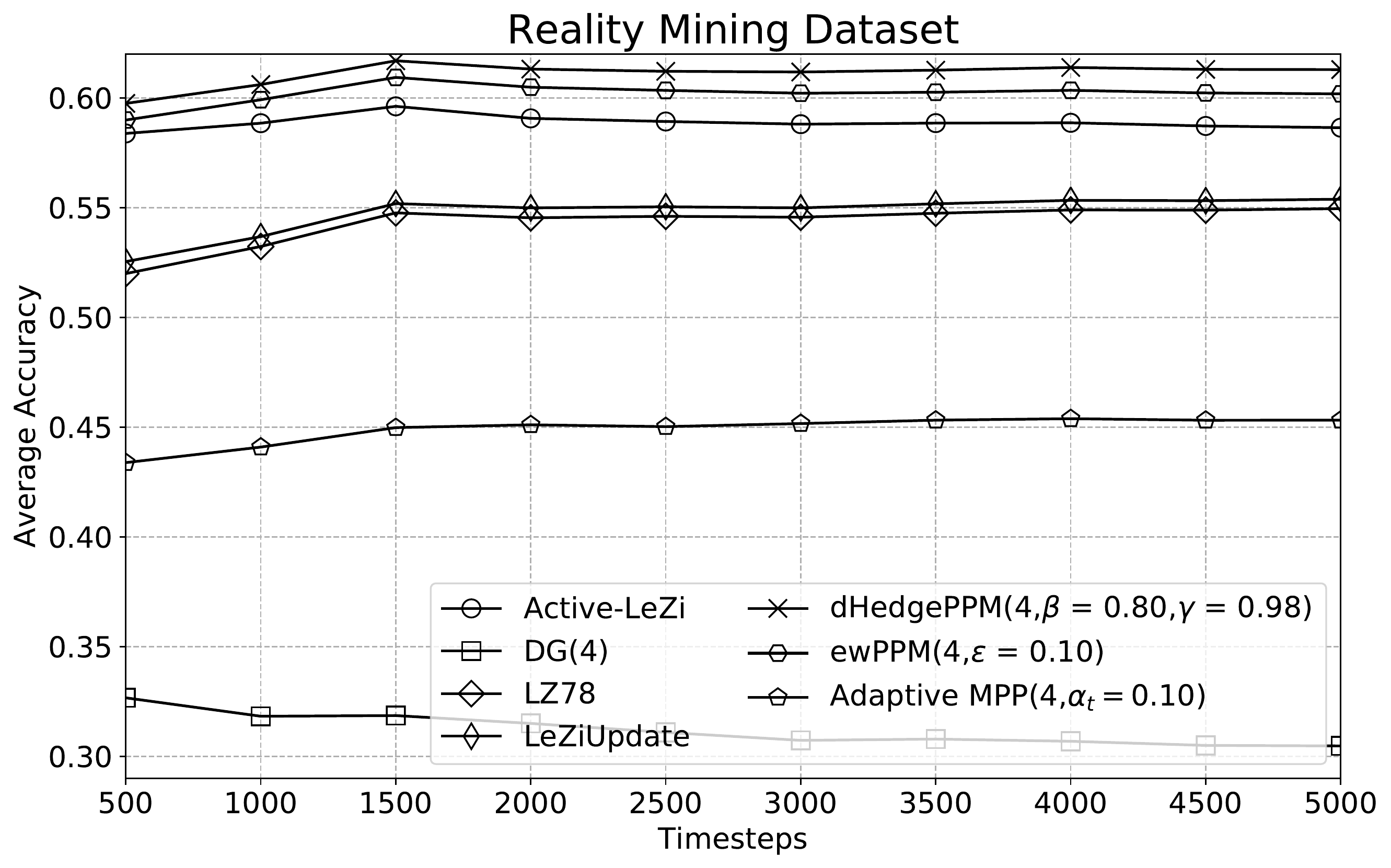}
                    \caption{Reality Mining Dataset}
                    \label{fig:acc_rm}
                \end{subfigure}%
                \begin{subfigure}[h]{0.49\textwidth}
                    \includegraphics[width=\linewidth]{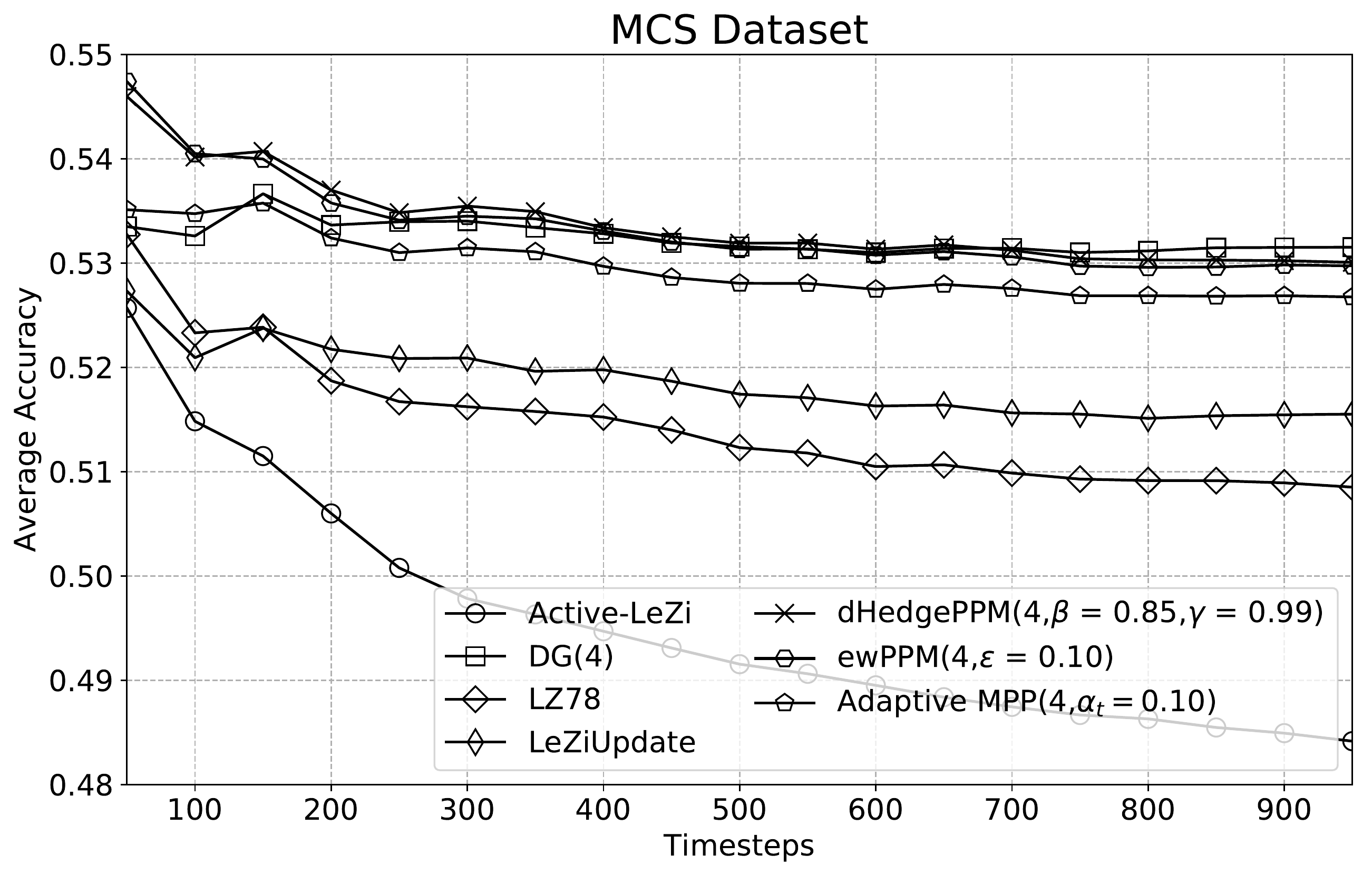}
                    \caption{MCS Prediction Dataset}
                    \label{fig:acc_mcs}
                \end{subfigure}\\
                \begin{subfigure}[h]{0.49\textwidth}
                    \includegraphics[width=\linewidth]{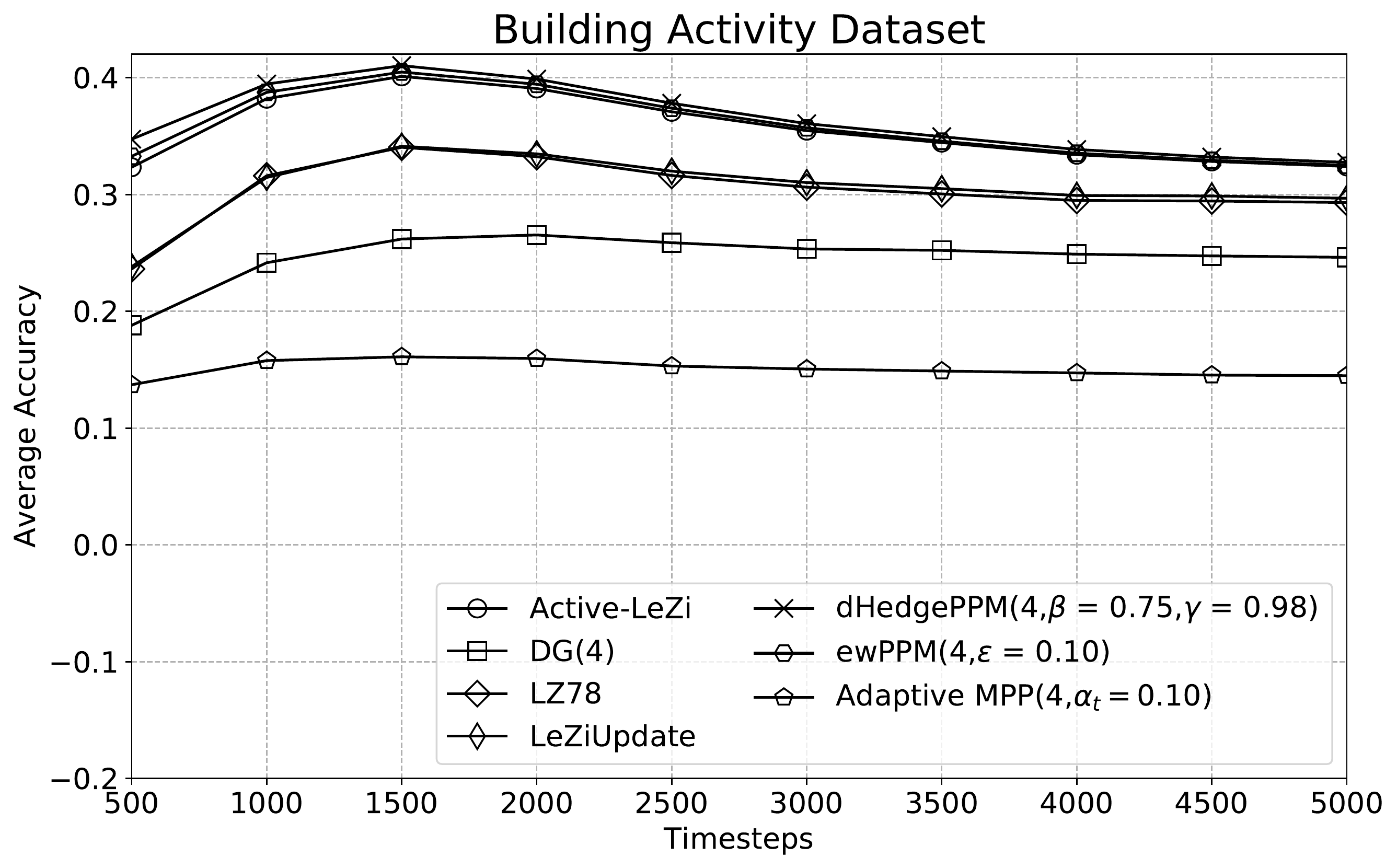}
                    \caption{Building Activity Dataset}
                    \label{fig:acc_ba}
                \end{subfigure}%
                \begin{subfigure}[h]{0.49\textwidth}
                    \includegraphics[width=\linewidth]{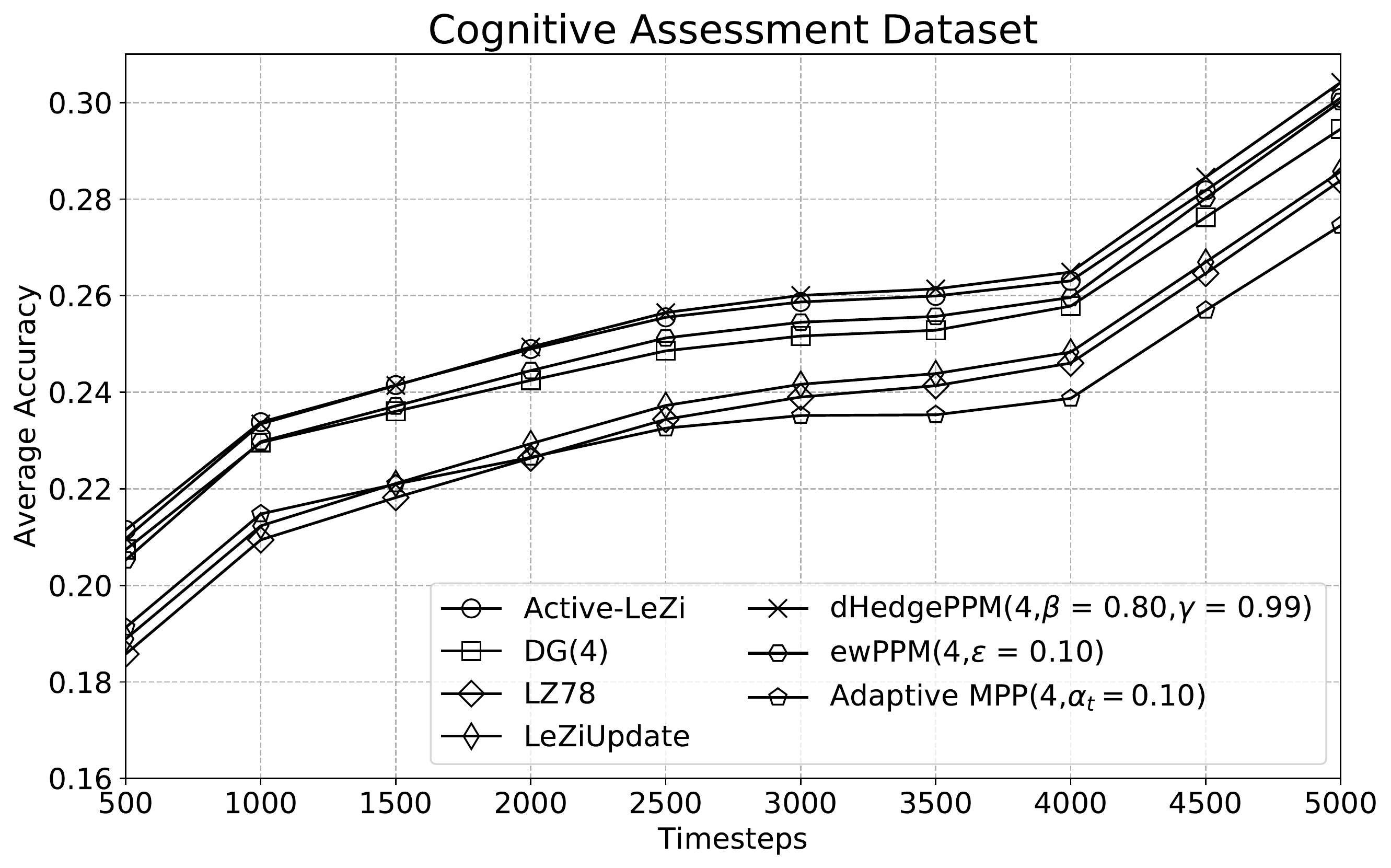}
                    \caption{Cognitive Assessment Dataset}
                    \label{fig:acc_ca}
                \end{subfigure}%
                \caption{Evolution of accuracy over time.} \label{fig:results_accuracy}
            \end{figure*}
            
            In each dataset, although different algorithms are performing better, the proposed
            method is able to match the performance of the best algorithm in all the datasets.
            Even though Active-LeZi is able to perform satisfactorily in RM, BA and CA, it suffers a severe
            hit in MCS dataset. In MCS dataset, DG eventually performs better. But in all cases,
            the proposed method can be seen performing nearly as good as the best performer. Another
            interesting observation is the performance of ewPPM; even though its performance
            is comparable to that of proposed method, it lacks the theoretical guarantees enjoyed
            by the proposed method. 

            Another interesting observation in the case of BA and MCS prediction dataset is the
            decline in prediction accuracy after some time. This is due to the increase in
            entropy of the underlying model as the time progresses. Particularly, the 
            performance of Active-LeZi in MCS dataset is interesting - the accuracy reduces sharply
            as the time progresses. Even though we know that the entropy in the system increases,
            it will be worthwhile to examine this behavior to get better insights on how Active-LeZi
            works.

        \subsection{Comparison of execution time}
            All the algorithms are implemented on the programming language Julia (See \cite{Julia2017}) 
            and is ran on a Linux machine with Intel Core i5@2.90Ghz CPU and 16GB RAM. The time taken for the
            running each of the algorithms over the entire dataset is listed in 
            Table \ref{tab:complexity_time}.
            \begin{table}[!h]
                \centering
                \begin{tabular}{l r r r r}
                    \hline
                    \hline
                    Algorithm               & RM       & CA       & BA      &  MCS      \\
                    \hline
                    Active-LeZi             &  $158.1$ &  $170.3$ &   $8.1$ &  $12.6$   \\
                    Dependency Graph        &    $8.9$ &   $21.4$ &   $1.1$ &   $1.6$   \\
                    LZ78                    & $1422.6$ & $4950.4$ & $193.6$ & $123.1$   \\
                    LZ Update               &   $43.6$ &   $68.9$ &   $3.4$ &   $5.3$   \\
                    ewPPM                   &  $157.1$ &  $254.7$ &  $16.4$ &  $10.2$   \\
                    Adaptive MPP            &  $154.1$ &  $558.8$ &  $15.5$ &   $8.7$   \\
                    Proposed Method         &  $156.1$ &  $256.0$ &  $16.7$ &   $8.5$   \\
                    \hline
                    \hline
                \end{tabular}
                \caption{Execution Time for algorithms (in seconds)}
                \label{tab:complexity_time}
            \end{table}

            We can see that DG and LZ-Update are fast algorithms, but their performances are 
            not consistent across different datasets. A notable observation is the runtime
            of LZ78; even though it is supposed to be low, the additional check during the
            trie building process contributes to a high observed runtime of LZ78.
        
        \subsection{Comparison of memory requirements}
            For comparing the memory requirements, we provide the number of symbol nodes
            constructed by each of the algorithms. Each symbol node will hold space for
            the actual symbol and a number representing its support over its child nodes.
            The comparison is provided in Table \ref{tab:complexity_time}.
            \begin{table}[!h]
                \centering
                \begin{tabular}{l r r r r}
                    \hline
                    \hline
                    Algorithm               & RM         & CA         & BA        &  MCS        \\
                    \hline
                    Active-LeZi             &  $4098544$ &  $6683543$ &  $226030$ &  $493660$   \\
                    Dependency Graph        &   $168201$ &   $489968$ &   $34803$ &   $13946$   \\
                    LZ78                    &   $117041$ &   $493746$ &   $16415$ &   $45593$   \\
                    LZ Update               &   $154875$ &   $592527$ &   $20853$ &   $57362$   \\
                    ewPPM                   &   $505316$ &  $3371416$ &  $112260$ &  $181178$   \\
                    Adaptive MPP            &   $505316$ &  $3371416$ &  $112260$ &  $181178$   \\
                    Proposed Method         &   $505316$ &  $3371416$ &  $112260$ &  $181178$   \\
                    \hline
                    \hline
                \end{tabular}
                \caption{Number of symbol nodes created}
                \label{tab:complexity_space}
            \end{table}
            As mentioned earlier, ewPPM, Adaptive MPP, and the proposed method use the same trie
            building strategy and hence, have the same memory requirements. We can observe that the
            proposed algorithm is not as efficient as LZ78 and DG in terms of memory requirement.
            But, the advantage is clearly visible in the prediction accuracy.
    
    \section{Concluding Remarks}
        We introduced an algorithm for adaptively combining multiple experts in a non-stationary
        environment and applied it to the task of sequence prediction. We also
        derived an upper bound on the regret of its loss. Numerical verification is performed with
        six other widely used sequence prediction/aggregating strategies
        to prove the utility of the proposed algorithm. The main advantage
        is that the proposed method can be used as a drop in replacement for conventional 
        PPM methods without
        changing other parts of the system, and yet offer an improvement in performance.
    
        Even though we applied our adaptive expert combining method over PPM for the task of discrete
        sequence prediction, the method proposed in this work can be applied to a wide variety of
        problems that require combining opinions from multiple experts when the best expert keeps on
        changing. One of the potential
        applications could be to consider different algorithms like Context Tree Weighting, Probability
        Suffix Tree along with PPM to create a pool of experts and then use the proposed method
        to predict based on the knowledge acquired by all experts.

        An interesting direction for further studies will be the derivation of tighter
        bounds as the experiments show that the bound proposed in this work can be improved. 
        Prior knowledge about the accuracy
        of the experts in the pool might be the key to a tighter bound.
        

    \bibliographystyle{plain}
    \bibliography{./../library.bib}
    
    \section*{A. Proof of Lemma 4}
    
        \begin{proof}
                Our proof is partly based on the analysis of $HEDGE(\beta)$ (See \cite{Freund1997}).
                But the method of discounting we have introduced to the HEDGE algorithm leads 
                to certain technical difficulties in the proof which are addressed using 
                majorization theory.            
                From Algorithm 2, we have
                \begin{align}
                    w_k[N+1] &= w_k[N]^{\gamma} \cdot \beta^{l^{(k)}[N]} \nonumber \\
                        &= (w_k[N-1] \cdot \beta^{l^{(k)}[N-1]})^{\gamma} \cdot \beta^{l^{(k)}[N]} \nonumber \\
                        &= w_k[1]^{\gamma^N} \cdot \beta^{L_{k,N}(\gamma)}  \qquad \qquad \text{(From (Eqn.1))}. \label{eqn:weightTraceBack} 
                \end{align}
                Now consider the sum of weights of all experts at time instant $n+1$.
                \begin{align}
                    \sum \limits_{k=1}^{K} w_{k}[n+1] 
                        &= \sum \limits_{k=1}^{K} (w_{k}[n])^{\gamma} \cdot \beta^{l^{(k)}[n]}.
                \end{align}
                By Bernoulli's Inequality, for $\beta \geq -1$ and $l^{(k)}[n] \in [0,1]$,
                \begin{align*}
                    \beta^{l^{(k)}[n]} \leq 1 - (1-\beta)l^{(k)}[n].
                \end{align*}
                For \textit{Discounted HEDGE}, we have $\beta \in (0,1]$. Applying this in (\ref{eqn:sumOfWeights}), we have
                \begin{align}
                    \sum \limits_{k=1}^{K} w_{k}[n+1]
                        &\leq \sum \limits_{k=1}^{K} (w_{k}[n])^{\gamma} \cdot \left( 1-(1-\beta)l^{(k)}[n]\right) \nonumber \\
                        &= \sum \limits_{k=1}^{K} (w_{k}[n])^{\gamma} - 
                            (1-\beta) \sum \limits_{k=1}^{K} (w_{k}[n])^{\gamma} \cdot l^{(k)}[n] \nonumber
                \end{align}
                From Algorithm 2, noting that
                $p^{(k)}[n] = \frac{(w_k[n])^\gamma}{\sum \limits_{j=1}^{K} (w_j[n])^\gamma}$, we can write
                \begin{align}
                    \sum \limits_{k=1}^{K} w_{k}[n+1] 
                        &\leq \sum \limits_{k=1}^{K} (w_{k}[n])^{\gamma} -
                            (1-\beta) \left( \sum \limits_{j=1}^{K} (w_{k}[n])^{\gamma} \right)
                                \left( \sum \limits_{k=1}^{K} p^{(k)}[n] \cdot l^{(k)}[n]\right) \nonumber \\
                        &= \left( \sum \limits_{k=1}^{K} (w_k[n])^{\gamma} \right)
                               \left( 1 -(1-\beta)
                                   \left( \sum \limits_{k=1}^{K} p^{(k)}[n] \cdot l^{(k)}[n] \right) 
                               \right) \nonumber \\
                        &= \left( \sum \limits_{k=1}^{K} (w_k[n])^{\gamma} \right) \cdot
                               \left( 1 -(1-\beta) \textbf{p}[n] \cdot \textbf{l}[n] \right) \nonumber
                \end{align}
                Applying $1+x \leq \exp(x)$, we can write
                \begin{align}
                    \sum \limits_{k=1}^{K} w_{k}[N+1] 
                        &\leq \left( \sum \limits_{k=1}^{K} (w_k[N])^{\gamma} \right) 
                               \exp\left( -(1-\beta) \textbf{p}[N] \cdot \textbf{l}[N] \right) \nonumber \\
                        &= \left( \sum \limits_{k=1}^{K}
                               \left( w_k[N-1]^{\gamma^{2}} \cdot \beta^{\gamma l^{(k)}[N-1]} \right)
                            \right) 
                            \exp\left( -(1-\beta) \textbf{p}[N] \cdot \textbf{l}[N] \right) \nonumber \\
                        &\leq \sum \limits_{k=1}^{K} (w_{k}[N-1])^{\gamma^2}
                            \left( 1 - (1-\beta) \cdot \left( \sum \limits_{k=1}^{K} 
                        \frac{(w_k[N-1])^{\gamma^{2}}}{\sum \limits_{j=1}^{K} (w_j[N-1])^{\gamma^{2}}} \cdot l^{(k)}[N-1]\right) \right) \nonumber \\
                        &\qquad \qquad \qquad \qquad \qquad \qquad \qquad \qquad \qquad \qquad \cdot \exp\left( -(1-\beta) \textbf{p}[N] \cdot \textbf{l}[N] \right) \nonumber \\
                        &\leq \sum \limits_{k=1}^{K} (w_{k}[1])^{\gamma^N}
                            \prod \limits_{n=1}^{N-1} \left( 1 - (1-\beta) \cdot \left( \sum \limits_{k=1}^{K} 
                            \frac{(w_k[n])^{\gamma^{N-n+1}}}{\sum \limits_{j=1}^{K} (w_j[n])^{\gamma^{N-n+1}}} \cdot l^{(k)}[n]\right) \right) \nonumber \\
                        &\qquad \qquad \qquad \qquad \qquad \qquad \qquad \qquad \qquad \cdot \exp\left( -(1-\beta) \textbf{p}[N] \cdot \textbf{l}[N] \right)
                \end{align}
               
                Because of the discounting that has been introduced to the HEDGE algorithm, we get terms of the
                form $\frac{(w_k[n])^{\gamma^{N-n+1}}}{\sum \limits_{j=1}^{K} (w_j[n])^{\gamma^{N-n+1}}}$,
                which cannot be readily used to calculate the expected loss of final predictor, as done
                in \cite{Freund1997}. However, if 
                \begin{align}    
                    \sum \limits_{k=1}^{K} 
                        \frac{(w_k[n])^{\gamma^{N-n}}}{\sum \limits_{j=1}^{K} (w_j[n])^{\gamma^{N-n}}} \cdot l^{(k)}[n]
                        \geq
                        \sum \limits_{k=1}^{K} 
                            \frac{(w_k[n])^{\gamma}}{\sum \limits_{j=1}^{K} (w_j[n])^{\gamma}} \cdot l^{(k)}[n] \label{eq:upperBoundOnP}
                \end{align}
                we can upper bound the LHS in (\ref{eq:prodInequality}) with expression involving instantaneous
                probability terms. Hence if (\ref{eq:upperBoundOnP}) holds, we will be able proceed in a manner similar
                to the analysis in \cite{Freund1997}. We show that such an inequality does hold provided
                certain conditions are met. Without loss of generality, when the experts are arranged in ascending
                order of their weights, if their instantaneous losses follow a descending pattern then 
                the above inequality holds. Refer to Appendix 2 for the  proof.                
                Now using (\ref{eq:upperBoundOnP}), we can rewrite (\ref{eq:prodInequality}) as
                \begin{align}
                    \sum \limits_{k=1}^{K} w_{k}[N+1] 
                        &\leq \left( \sum \limits_{k=1}^{K} w_k[N-1]^{\gamma^{2}} \right)
                            \exp\left( -\gamma (1-\beta) \textbf{p}[N-1] \cdot \textbf{l}[N-1] \right)
                            \exp\left( -(1-\beta) \textbf{p}[N] \cdot \textbf{l}[N] \right) \nonumber
                \end{align}
               
                Combining the product terms and simplifying, we get
               
                \begin{align}
                    \sum \limits_{k=1}^{K} w_{k}[N+1]   
                        &\leq \left( \sum \limits_{k=1}^{K} w_k[N-1]^{\gamma^{2}} \right)
                            \prod \limits_{n=N-1}^{N}
                                    \exp (-(1-\beta) \cdot \gamma^{N-n} \textbf{p}[n] \textbf{l}[n] ) \nonumber \\
                        &= \left( \sum \limits_{k=1}^{K} w_k[N-1]^{\gamma^{2}} \right)
                            \left( 
                                   -(1-\beta) \sum \limits_{n=N-1}^{N} \gamma^{N-n} \textbf{p}[n] \textbf{l}[n] 
                               \right)  \nonumber \\
                        &\leq \left( \sum \limits_{k=1}^{K} w_k[1]^{\gamma^N} \right)
                            \exp \left( 
                                   -(1-\beta) \sum \limits_{n=1}^{N} \gamma^{N-n} \textbf{p}[n] \textbf{l}[n]
                               \right) \nonumber \\
                        &= \left( \sum \limits_{k=1}^{K} w_k[1]^{\gamma^N} \right) 
                               \exp \left( -(1-\beta) \cdot L_{N}(\gamma) \right) \nonumber
                               \quad (\because \text{Eqn.1 from main paper})\nonumber
                \end{align}
                
                Taking logarithm on both sides,
                \begin{align}
                    \ln \left( \sum_{k=1}^{K} w_k[N+1] \right)
                        &\leq -(1-\beta) \cdot L_N(\gamma) + \ln \left( \sum \limits_{k=1}^{K} w_k[1]^{\gamma^N} \right) \nonumber
                \end{align}
                Rearranging,
                \begin{align}
                    L_N(\gamma) 
                        &\leq - \frac{
                            \ln \left( \sum \limits_{k=1}^{K} w_k[N+1] \right) - 
                                \ln \left( \sum \limits_{k=1}^{K} w_k[1]^{\gamma^N} \right)
                        }{1-\beta}
                \end{align}
                Let $k^{*} = \underset{k \in \mathcal{K}}{\arg \min} \:L_{k,N}(\gamma)$ be the set index
                for the best expert in collection $\mathcal{K}$. Then we have
                \begin{align}
                   \sum \limits_{k=1}^{K} w_k[N+1] 
                       &= \sum \limits_{k=1}^{K} w_k[1]^{\gamma^N} \cdot \beta^{L_{k,N}(\gamma)} \qquad(\because \text{From (\ref{eqn:weightTraceBack})}) \nonumber  \\
                    &\geq \beta^{L_{k^{*},N}(\gamma)} \cdot w_{k^{*}}[1]^{\gamma^N} \nonumber \\
                    \intertext{Taking logarithm,}
                    \ln \left( \sum \limits_{k=1}^{N} w_k [N+1] \right)
                    &\geq L_{k^*,N}(\gamma) \ln(\beta) + \ln \left( w_{k^*}[1]^{\gamma^N} \right) \nonumber
                \end{align}
                Applying this to (\ref{eqn:LN_midway}), we have
                \begin{align}
                    L_N(\gamma) 
                        &\leq -\frac{
                            L_{k^*,N}(\gamma) \ln(\beta)}{1-\beta}
                             - \frac{ 
                                \ln \left( w_{k^*}[1]^{\gamma^N} \right) -
                                    \ln \left( \sum \limits_{k=1}^{K} w_k[1]^{\gamma^N} \right)
                           }{1-\beta} \nonumber \\
                        &= \frac{\ln(1/\beta)}{1-\beta} L_{k^*,N}(\gamma) - 
                                \frac{1}{1-\beta} \ln \left( \frac{ w_{k^*}[1]^{\gamma^N} } 
                                   {\sum \limits_{k=1}^{K} w_k[1]^{\gamma^N}} \right) \nonumber
                \end{align}
            
                By setting $w_k[1] = W \:\forall\: k \in \mathcal{K}$ with $ W > 0$, we can observe that
                \begin{align}
                    L_N(\gamma)
                        &\leq \frac{\ln(1/\beta)}{1-\beta} L_{k^*,N}(\gamma) +
                                    \frac{\ln K}{1-\beta}        
                \end{align}
            \end{proof} 

    \section*{B. Validity of Proposed Bound}
        \begin{figure*}[t]
                \begin{subfigure}[h]{0.33\textwidth}
                    \includegraphics[width=\linewidth]{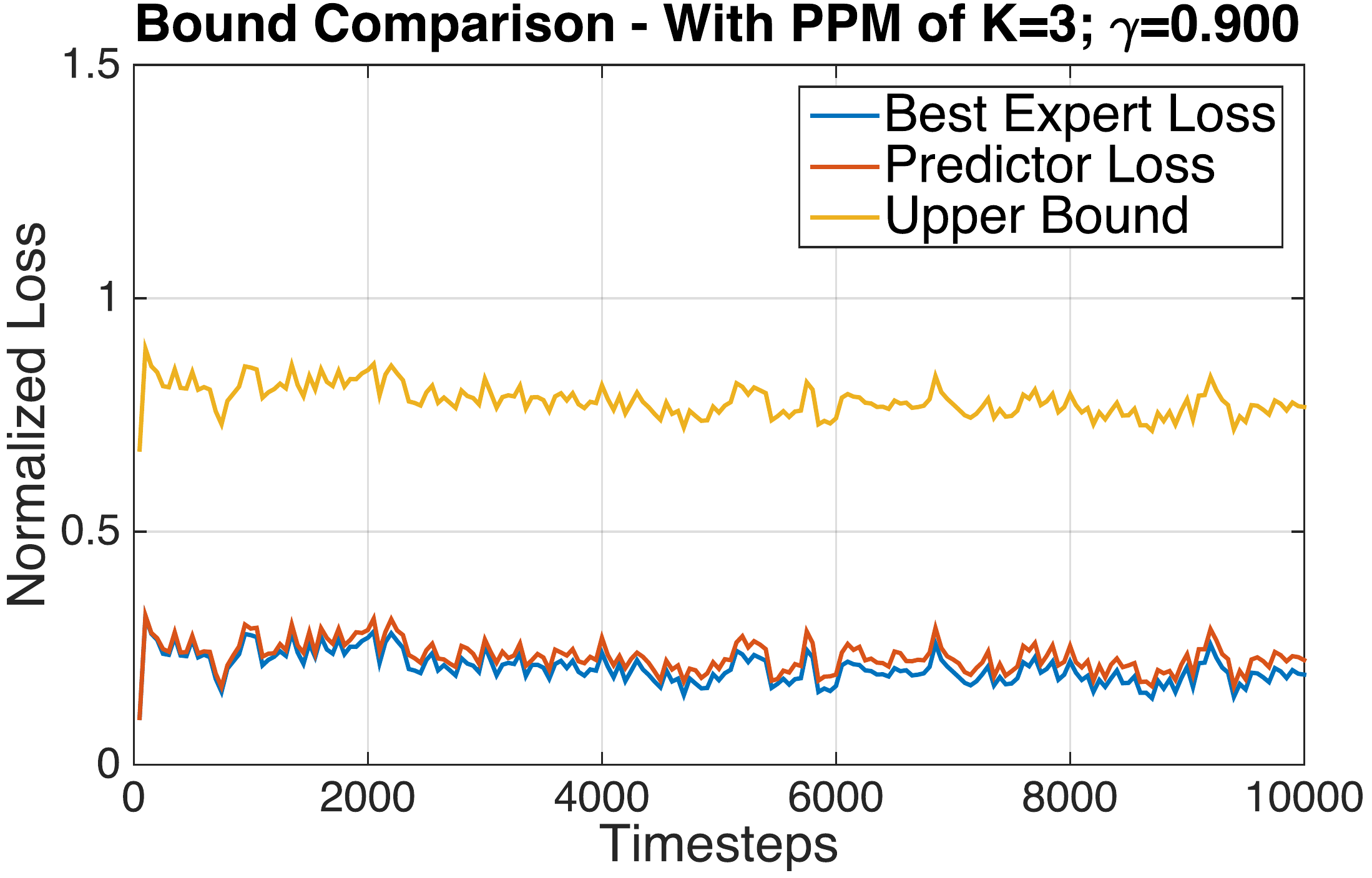}
                    \caption{No.of Experts = 3}
                    \label{fig:dscSim_g090_k03}
                \end{subfigure}%
                \begin{subfigure}[h]{0.33\textwidth}
                    \includegraphics[width=\linewidth]{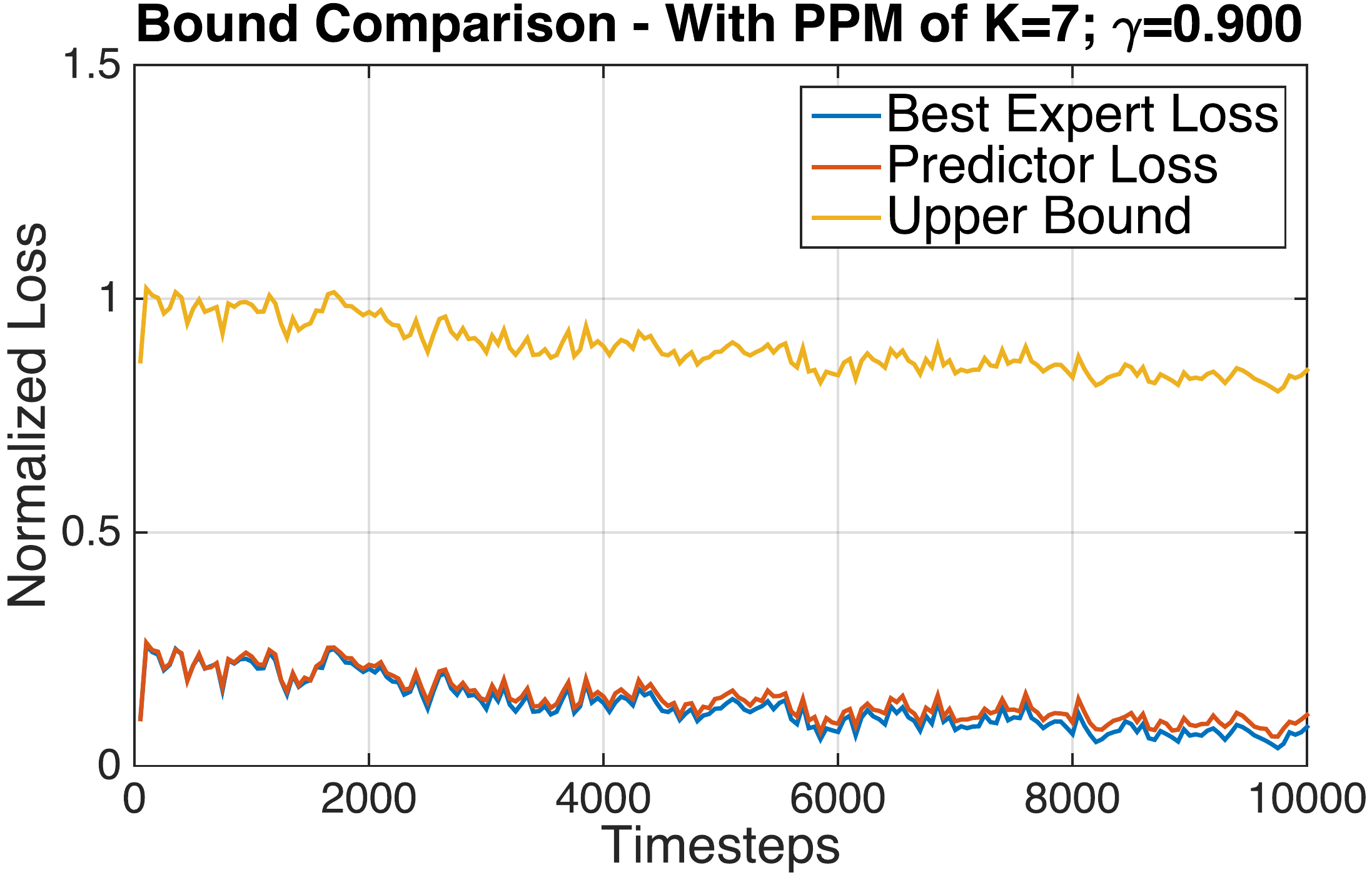}
                    \caption{No.of Experts = 7}
                    \label{fig:dscSim_g090_k07}
                \end{subfigure}%
                \begin{subfigure}[h]{0.33\textwidth}
                    \includegraphics[width=\linewidth]{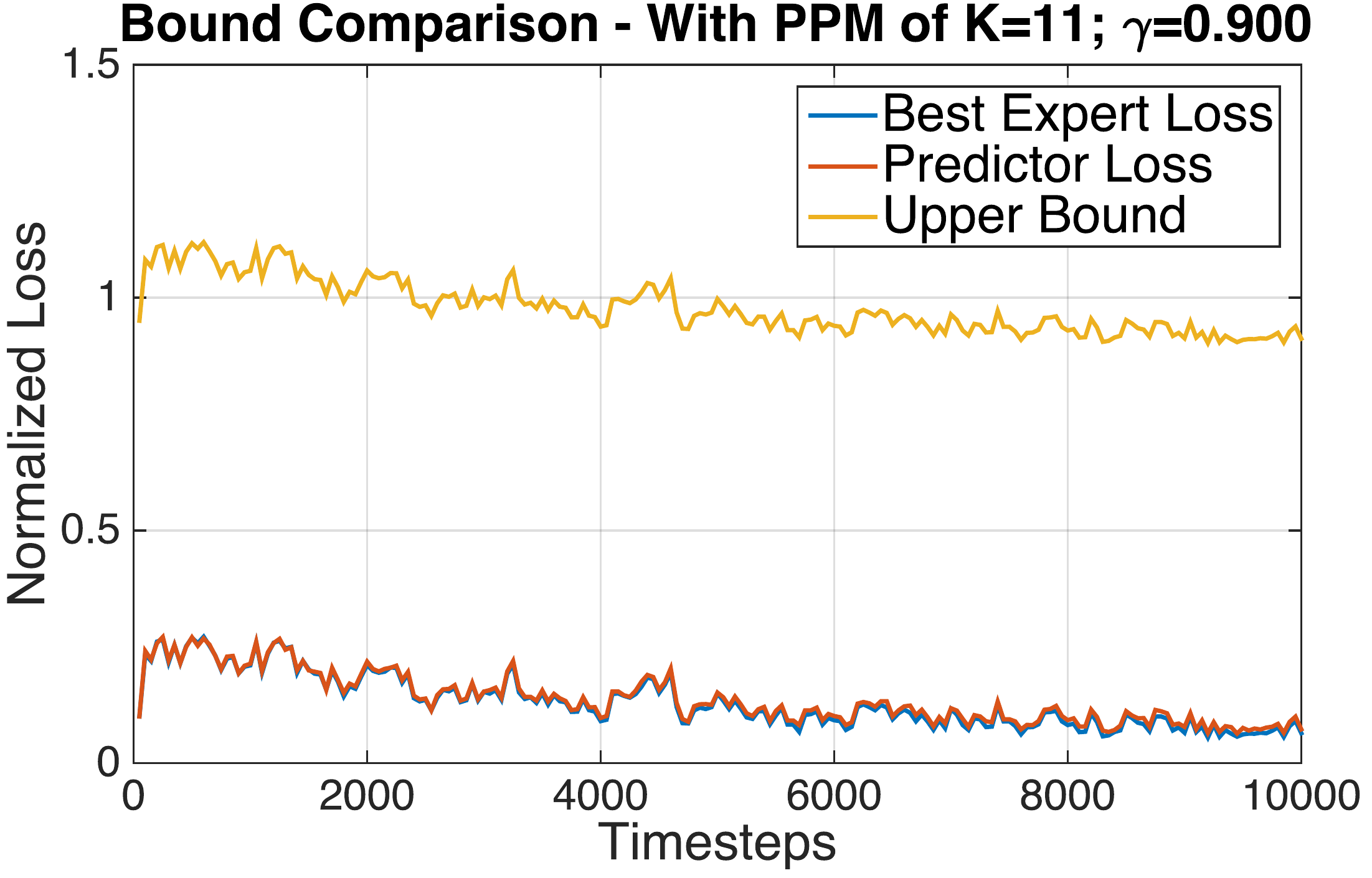}
                    \caption{No.of Experts = 11}
                    \label{fig:dscSim_g090_k1}
                \end{subfigure}%
                \caption{Loss of Predictor with $\gamma = 0.90$}\label{fig:bound_g090}
            \end{figure*}
            \begin{figure*}[t]
                \begin{subfigure}{0.33\textwidth}
                    \includegraphics[width=\linewidth]{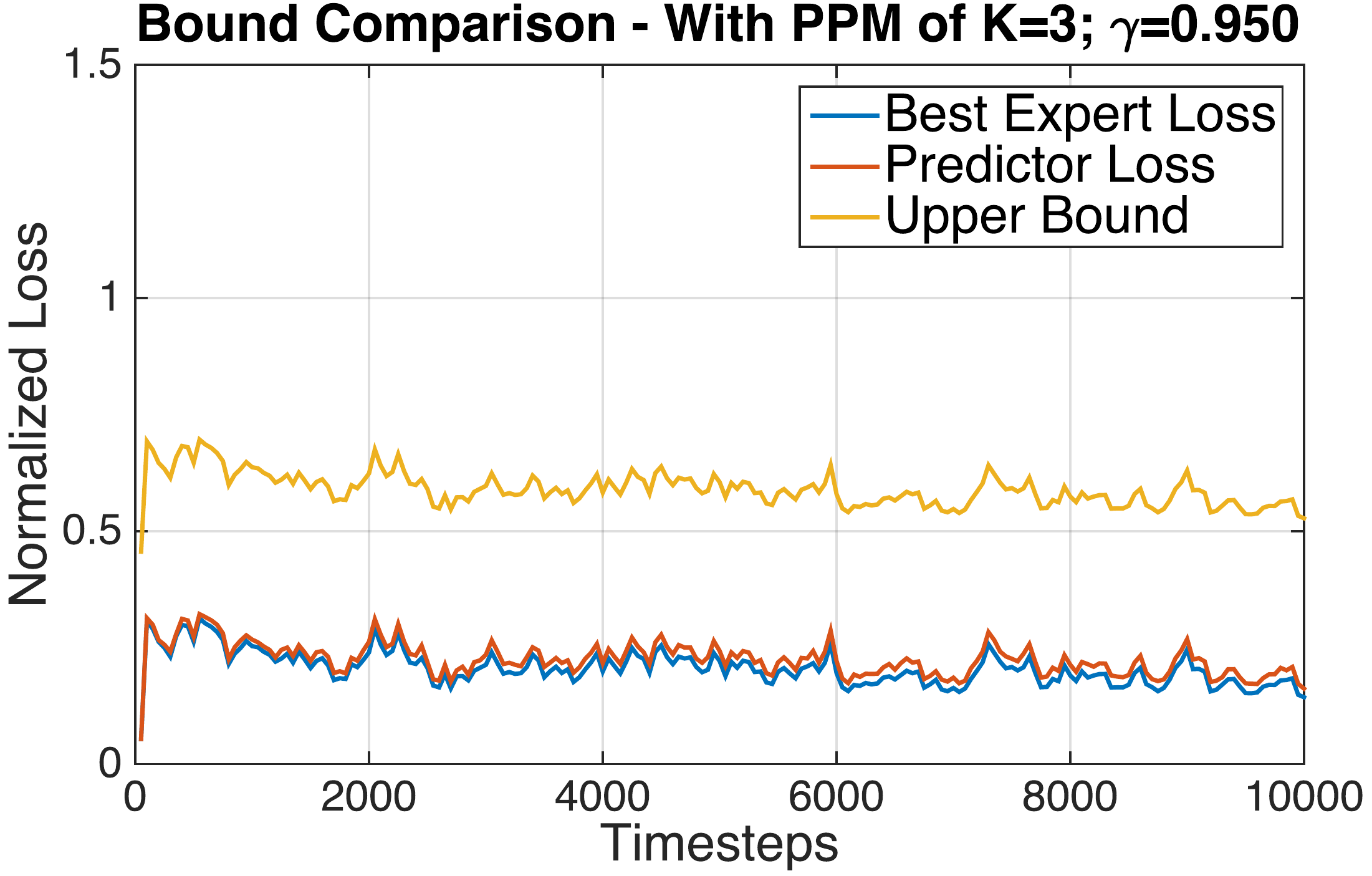}
                    \caption{No.of Experts = 3}
                    \label{fig:dscSim_g095_k03}
                \end{subfigure}%
                \begin{subfigure}{0.33\textwidth}
                    \includegraphics[width=\linewidth]{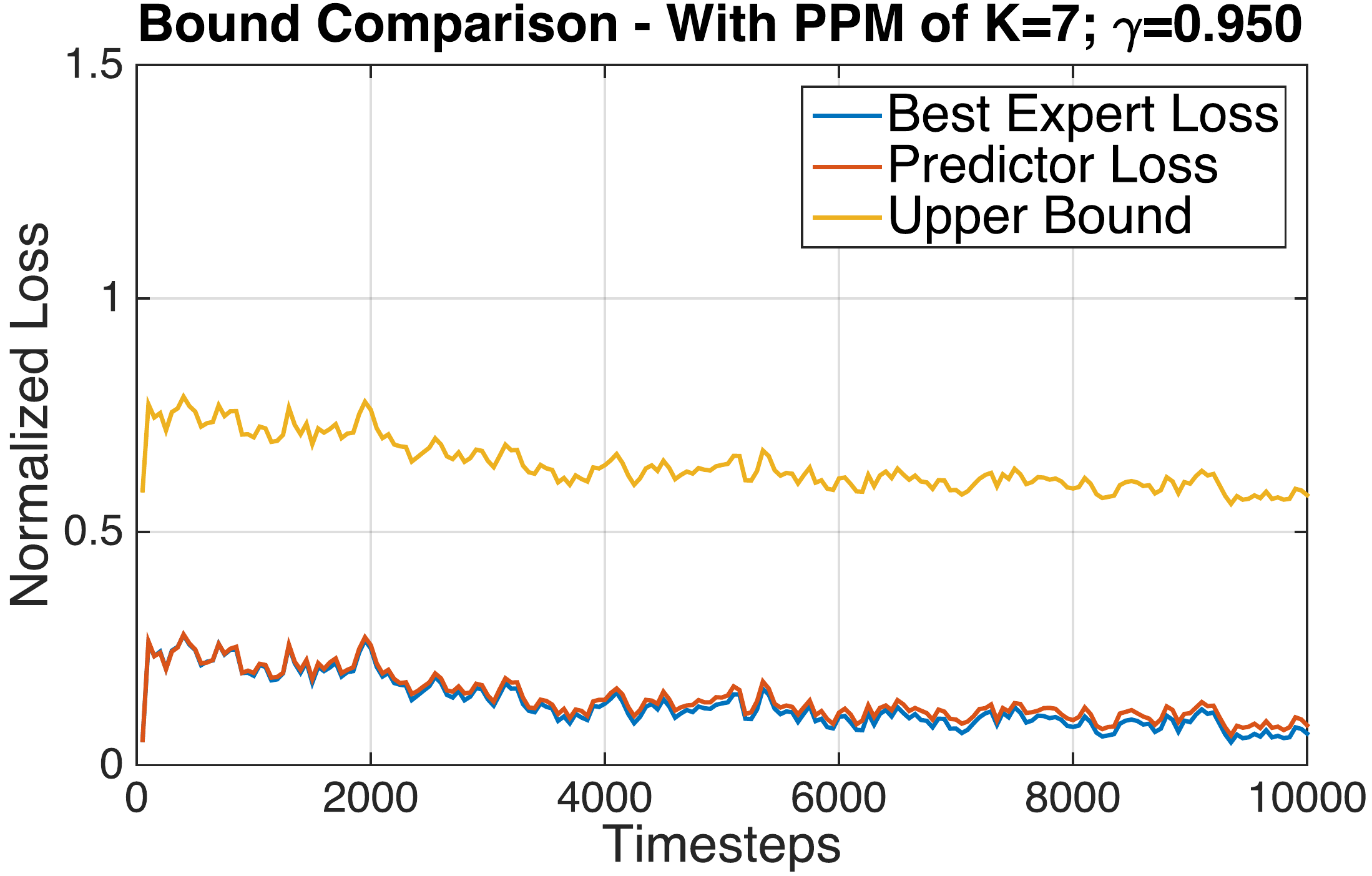}
                    \caption{No.of Experts = 7}
                    \label{fig:dscSim_g095_k07}
                \end{subfigure}%
                \begin{subfigure}{0.33\textwidth}
                    \includegraphics[width=\linewidth]{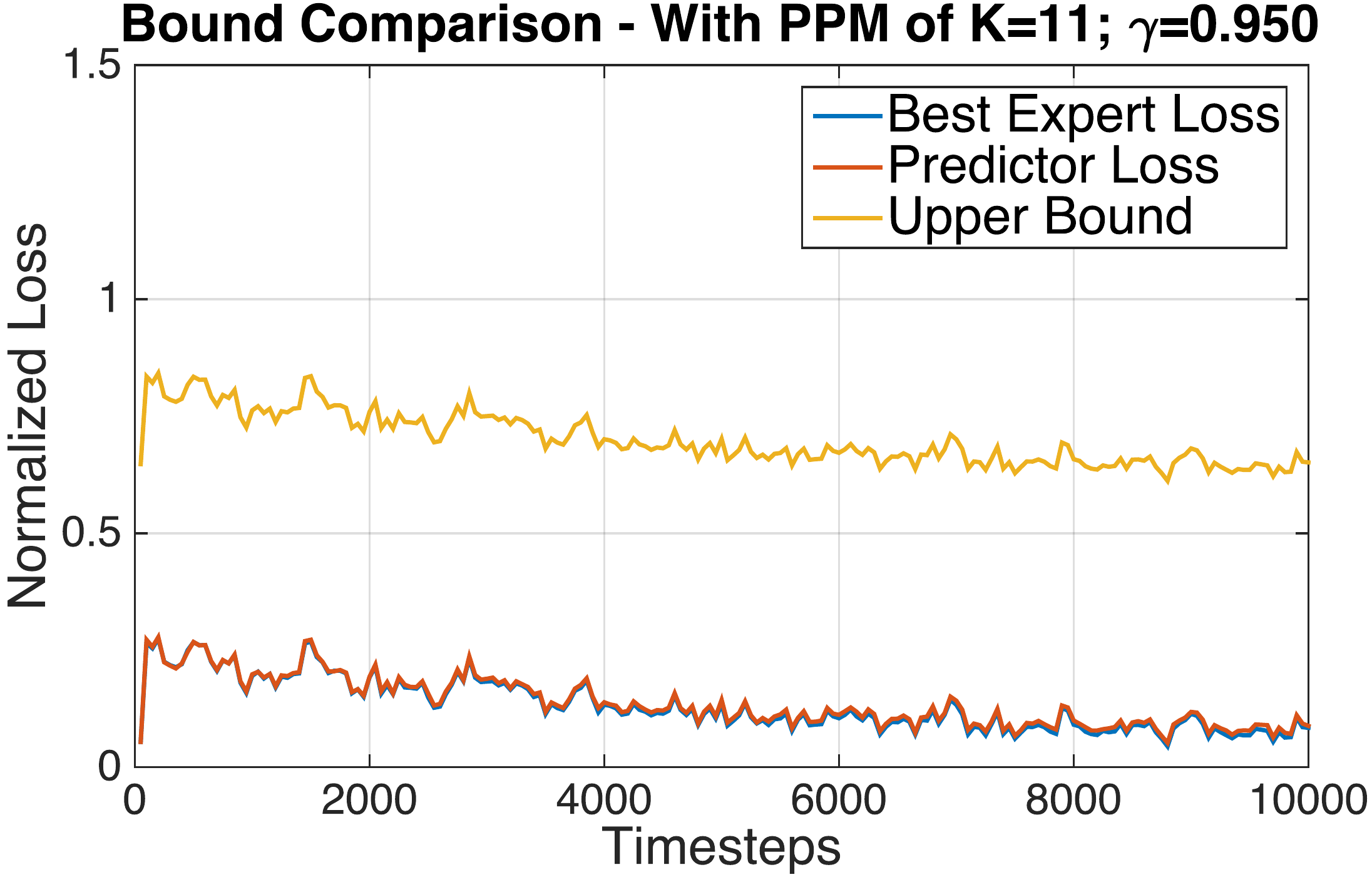}
                    \caption{No.of Experts = 11}
                    \label{fig:dscSim_g095_k1}
                \end{subfigure}%
                \caption{Loss of Predictor with $\gamma = 0.95$}\label{fig:bound_g095}
            \end{figure*} 
            \begin{figure*}[t]
                \begin{subfigure}{0.33\textwidth}
                    \includegraphics[width=\linewidth]{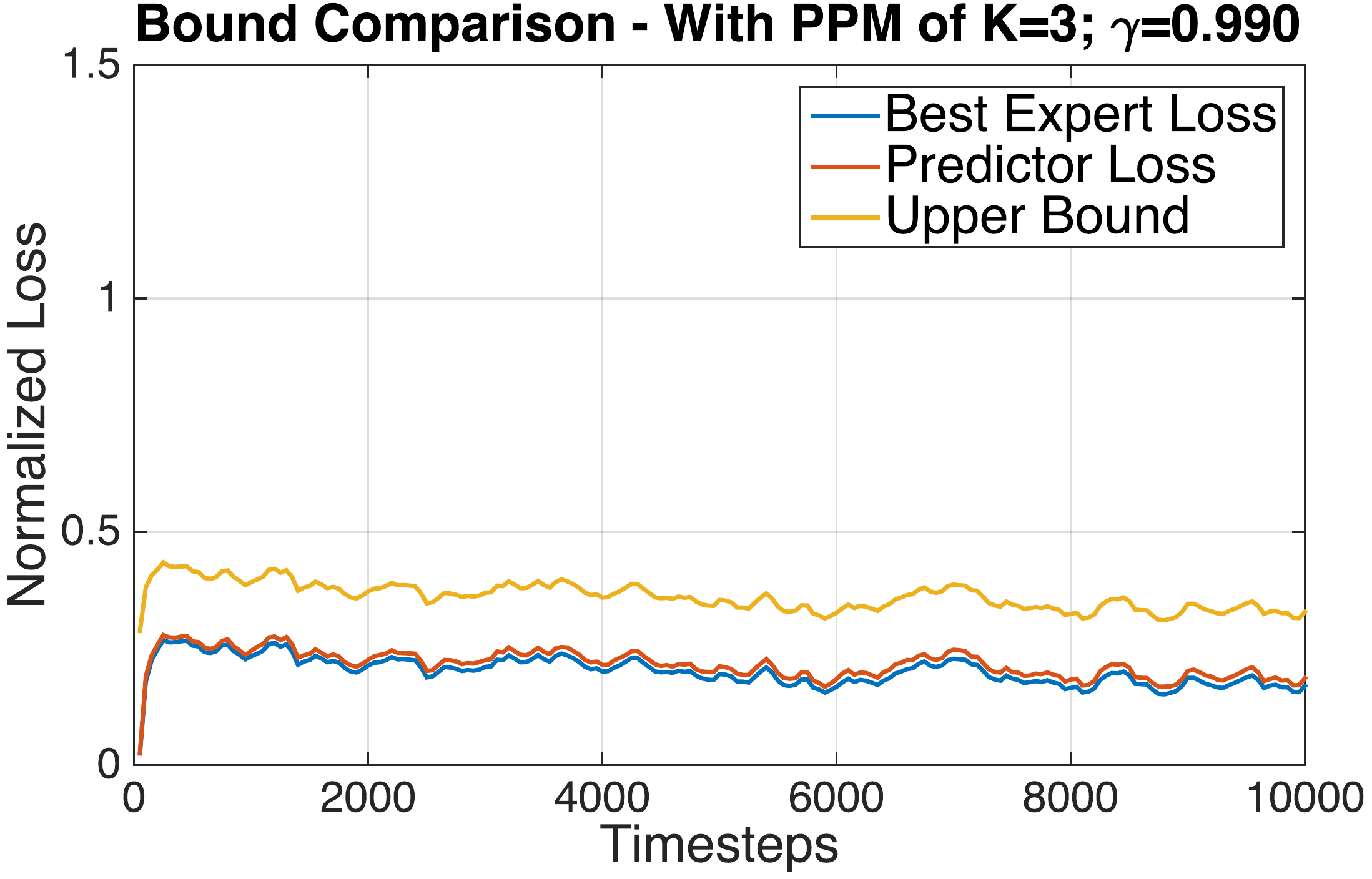}
                    \caption{No.of Experts = 3}
                    \label{fig:dscSim_g099_k03}
                \end{subfigure}%
                \begin{subfigure}{0.33\textwidth}
                    \includegraphics[width=\linewidth]{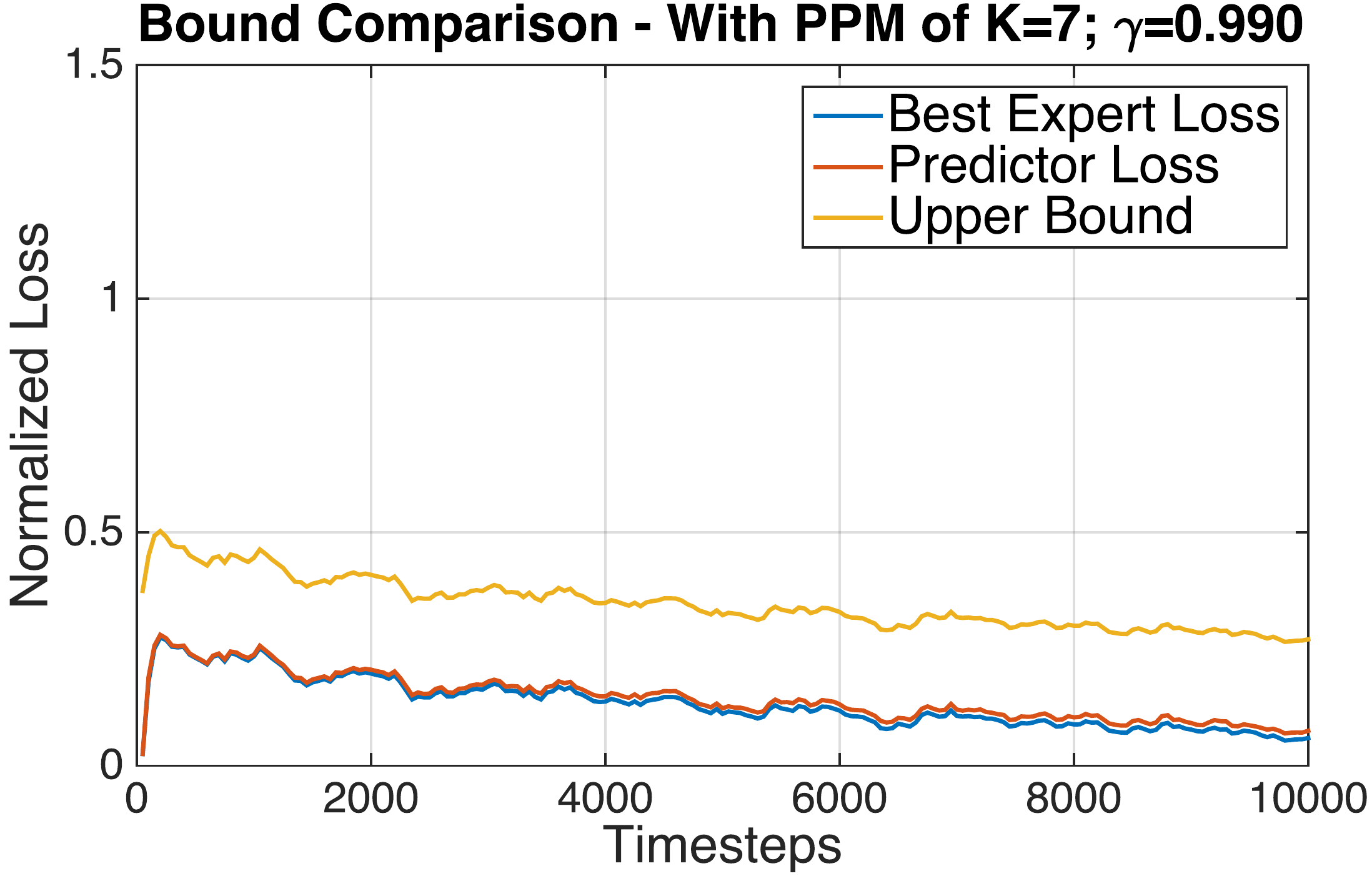}
                    \caption{No.of Experts = 7}
                    \label{fig:dscSim_g099_k07}
                \end{subfigure}%
                \begin{subfigure}{0.33\textwidth}
                    \includegraphics[width=\linewidth]{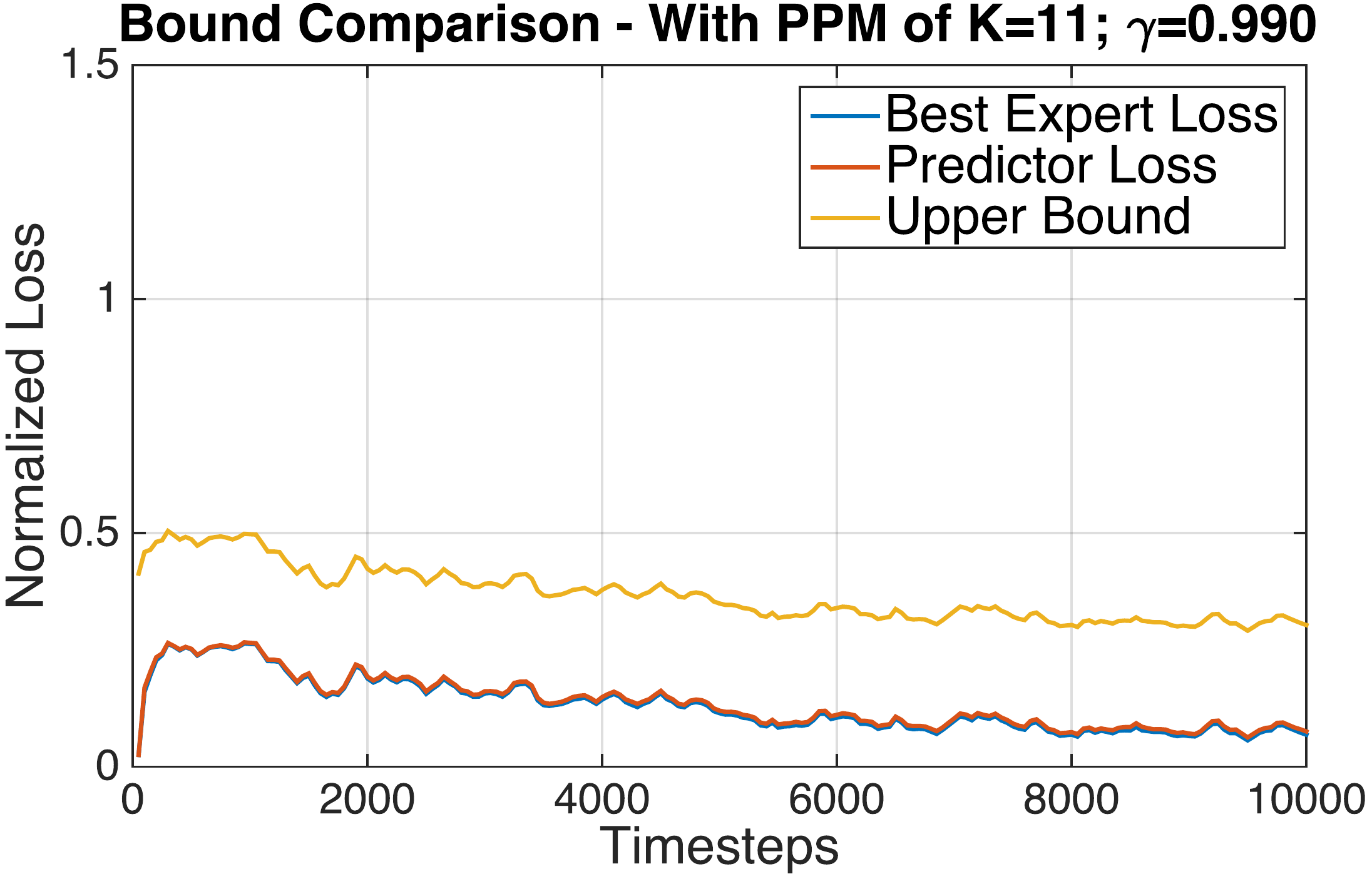}
                    \caption{No.of Experts = 11}
                    \label{fig:dscSim_g099_k1}
                \end{subfigure}%
                \caption{Loss of Predictor with $\gamma = 0.99$}\label{fig:bound_g099}
            \end{figure*}
            \begin{figure*}
                \begin{subfigure}[b]{0.33\textwidth}
                    \includegraphics[width=\linewidth]{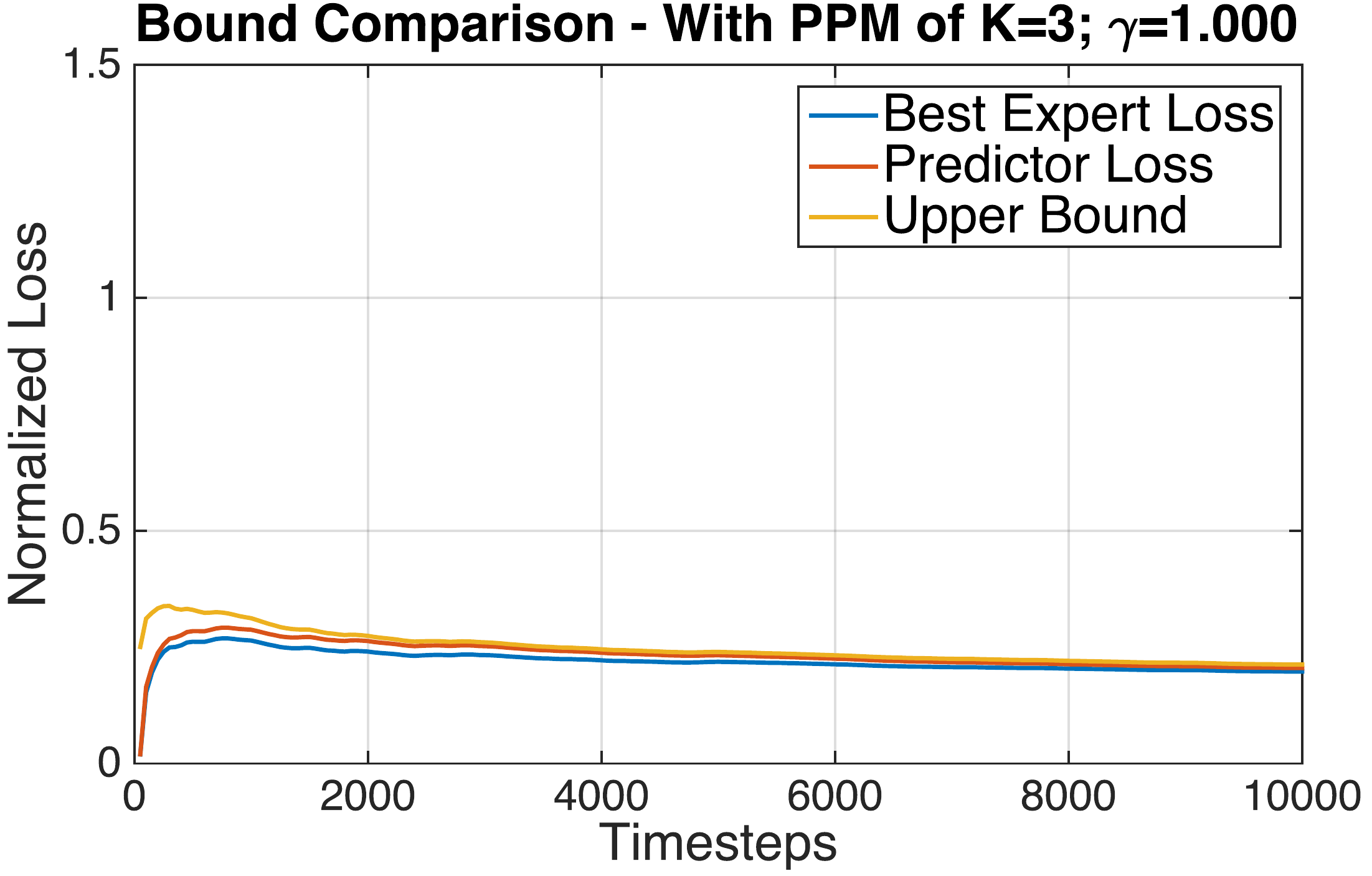}
                    \caption{No.of Experts = 3}
                    \label{fig:dscSim_g100_k03}
                \end{subfigure}%
                \begin{subfigure}[b]{0.33\textwidth}
                    \includegraphics[width=\linewidth]{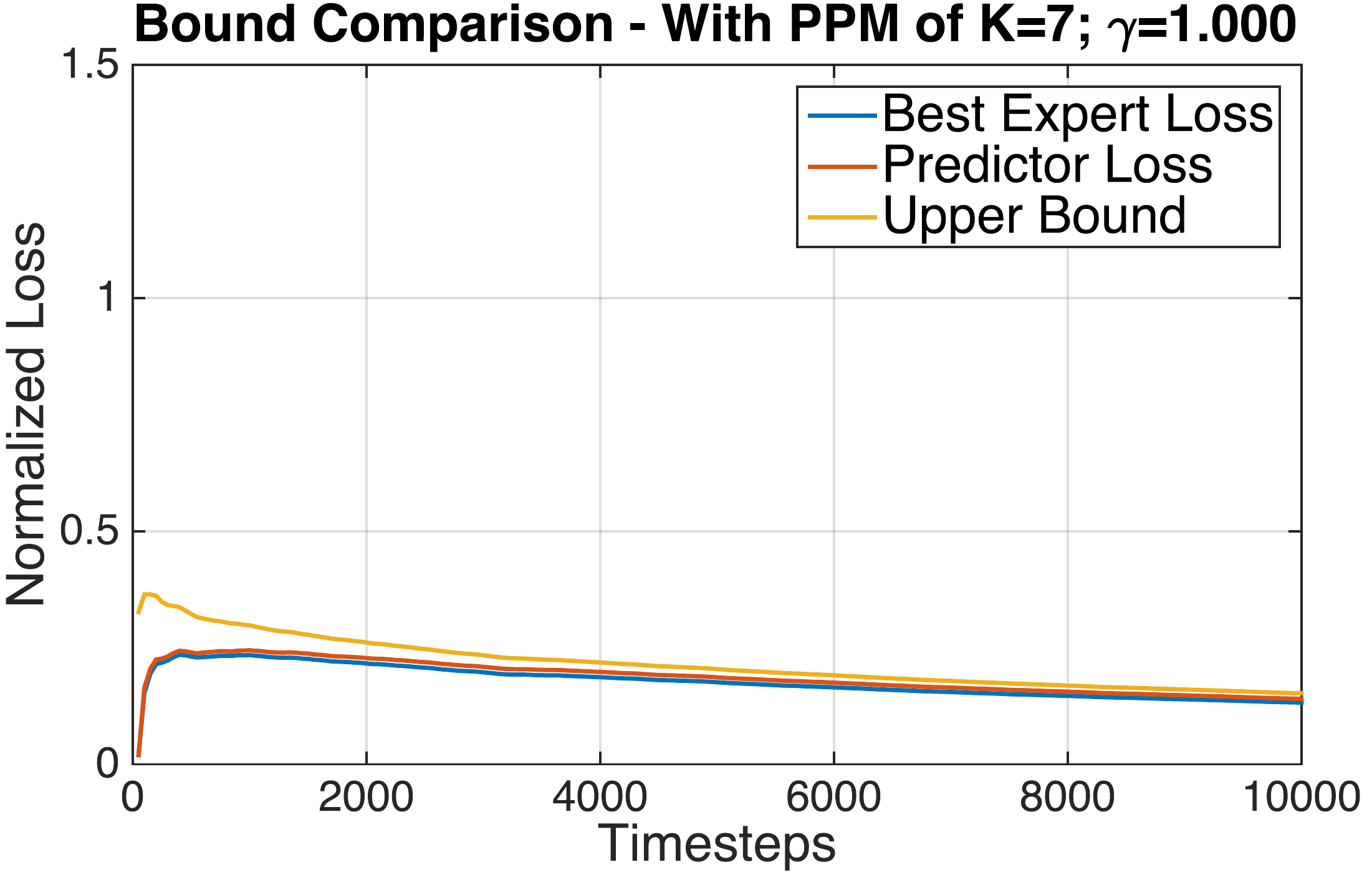}
                    \caption{No.of Experts = 7}
                    \label{fig:dscSim_g100_k07}
                \end{subfigure}%
                \begin{subfigure}[b]{0.33\textwidth}
                    \includegraphics[width=\linewidth]{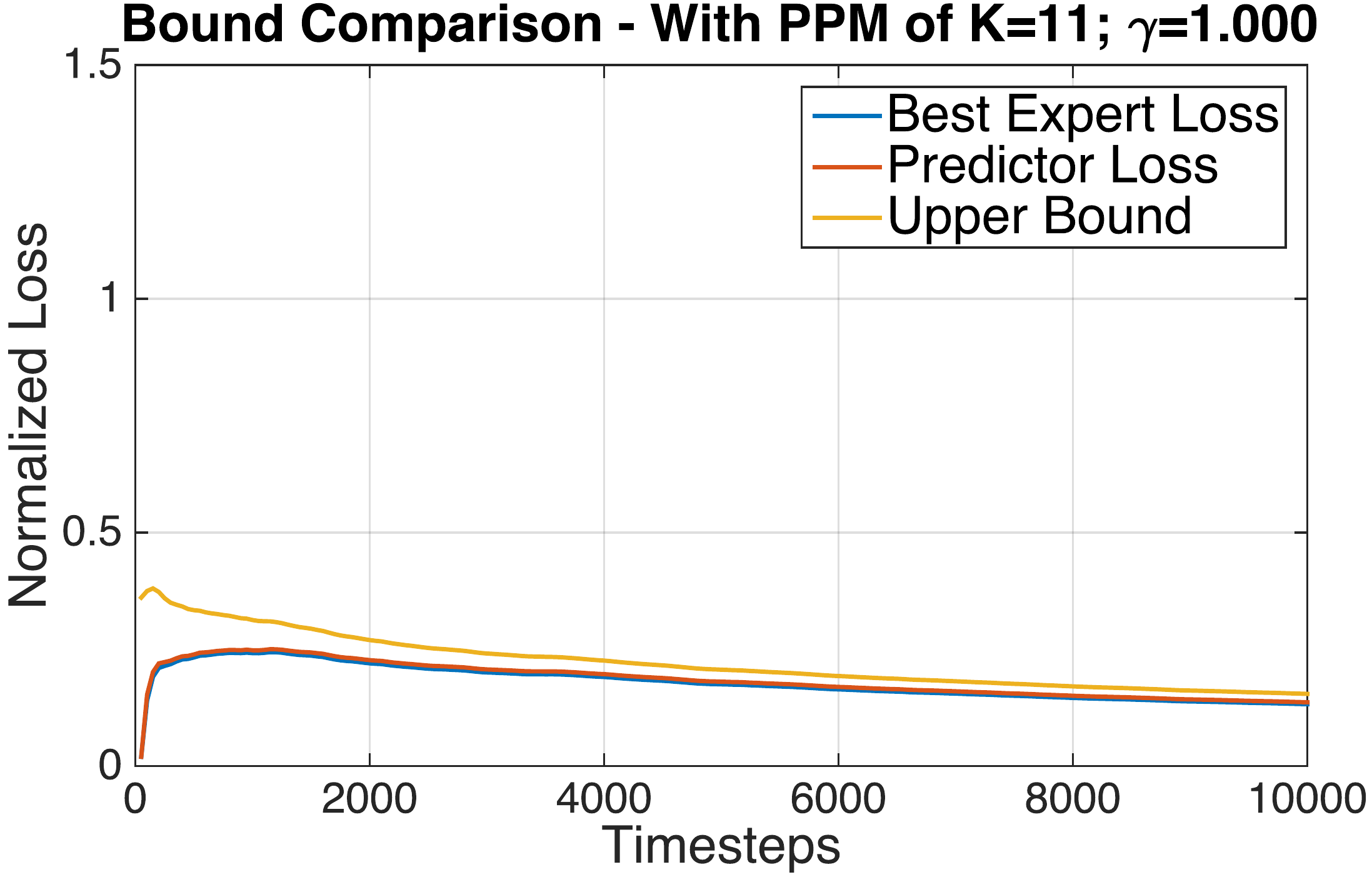}
                    \caption{No.of Experts = 11}
                    \label{fig:dscSim_g100_k1}
                \end{subfigure}%
                \caption{Loss of Predictor with $\gamma = 1.00$. This is equivalent to HEGDE.}\label{fig:bound_g100}
            \end{figure*}

            To verify the validity of the bound derived in Theorem 5,
	        experiments are done in a synthetic dataset and the results are reported after averaging
	        over multiple runs. To generate test sequences for this experiment a Markov model is
	        created as follows. Sequence of User\#0 from the Cognitive Assessment dataset of CASAS
	        project (See\cite{casas_wsu}) is taken and first 5000 symbols are trained into a trie with context length $6$.
	        Then this trie is used to generate sequences which are in turn fed into Discounted
	        HEDGE with PPM for the prediction task. This is repeated over 100 independent runs 
	        and the average results are reported.	        
	        
	        Experiments are conducted in three different scenarios for four different values of
	        $\gamma$. In each experiment, we varied the context lengths available in the pool
	        of experts. Since the synthetic data is coming from a model having context length 6,
	        we tried with context length of 2, 6 and 10. This in turn results in 3, 7 and
	        11 experts in the pool respectively. In all the experiments, optimal value of $\beta$
	        is calculated as mentioned in Eqn. 13 in main paper. Results are shown in Fig
	        \ref{fig:bound_g090} - Fig \ref{fig:bound_g100}. 
	        Loss is normalized by dividing by
	        $\tilde{L}$ to facilitate a direct comparison results with different $\gamma$ values.
            
            From the above results, we can observe that the proposed bound holds for in all the
            experimental scenarios. Loss are normalized to $\tilde{L}$, i.e., the fraction of 
            observed loss to the maximum attainable loss is plotted. 

%
\end{document}